\newtheorem{proposition}{Proposition}
\newcommand*\diff{\mathop{}\!\mathrm{d}}
\DeclareMathOperator*{\argmax}{arg\,max}
\DeclareMathOperator*{\argmin}{arg\,min}
\algnewcommand\INPUT{\item[\textbf{Input:}]}
\algnewcommand\OUTPUT{\item[\textbf{Output:}]}
\newcommand{\ie}{\textit{i}.\textit{e}.\ }
\newcommand{\eg}{\textit{e}.\textit{g}.\ }
\title{Open-Set Hypothesis Transfer with Semantic Consistency}
\author[1]{Zeyu Feng}
\author[1]{Chang Xu}
\author[1]{Dacheng Tao}
\affil[1]{UBTECH Sydney AI Centre, School of Computer Science, Faculty of Engineering,

The University of Sydney, Darlington, NSW 2008, Australia}
\affil[ ]{
    zfen2406@uni.sydney.edu.au, \{c.xu, dacheng.tao\}@sydney.edu.au
}
\date{}
\begin{document}

\maketitle

\begin{abstract}
  Unsupervised open-set domain adaptation (UODA) is a realistic problem where unlabeled target data contain unknown classes. Prior methods rely on the coexistence of both source and target domain data to perform domain alignment, which greatly limits their applications when source domain data are restricted due to privacy concerns. This paper addresses the challenging hypothesis transfer setting for UODA, where data from source domain are no longer available during adaptation on target domain. We introduce a method that focuses on the semantic consistency under transformation of target data, which is rarely appreciated by previous domain adaptation methods. Specifically, our model first discovers confident predictions and performs classification with pseudo-labels. Then we enforce the model to output consistent and definite predictions on semantically similar inputs. As a result, unlabeled data can be classified into discriminative classes coincided with either source classes or unknown classes. Experimental results show that our model outperforms state-of-the-art methods on UODA benchmarks.
\end{abstract}

\section{Introduction}

Unsupervised domain adaptation (UDA) methods have achieved remarkable advancement in reducing the effort of laborious data annotation on a target dataset of interest by utilizing labeled dataset that has a similar distribution. In recent years, unsupervised open-set domain adaptation (UODA), which assumes that the target domain has a richer label space, has received considerable attentions as it is a more realistic scenario. Many techniques have been proposed for this problem, for example, learning a margin between known and unknown instances in target domain~\cite{Saito_2018_ECCV}, or detecting confident known target instances based on knowledge learned from source domain and then aligning the distribution~\cite{Liu_2019_CVPR}.

Despite the great success have been achieved, a shortcoming of these approaches is the requirement of training with source domain data during adaptation, which does not apply to the cases where source data are not available from provider, such as data privacy concerns and copyright laws. The algorithm is unable to access the source domain data if these data contain private sensitive information, data are distributed on different devices, or the provider are prohibited to share the data. Thus, it is better for algorithms to adapt well on target domain with only a trained source model to protect data privacy. This also has the benefit of efficiently delivering only model parameters as opposed to large datasets.

Without source domain data, algorithms have to extract information from either the trained source model or target data. Previous methods~\cite{pmlr-v37-long15,pmlr-v37-ganin15,JMLR:v17:15-239} that rely on various forms of distribution alignment no longer work in this scenario. Focusing on building a stronger pre-trained model,~\citet{inheritable_cvpr_2020} propose a feature splicing technique for source model training so that it can better generalize to unseen categories in target domain, while requiring extra training pattern for source model provider. From another perspective,~\citet{liang2020really} aims to learn a source like representation on target domain data for pre-trained classifier, which is mainly designed for close-set DA. Although some progress have been achieved through these insights, structural properties in target domain data are not fully explored, from which semantic constraints can be constructed to benefit adaptation and separating known/unknown instances.

In this paper, we propose an UODA method that exploits both pre-trained model and target input structures. The learner on target domain (client) only requires a model trained under a standard softmax cross-entropy loss from the source domain provider (vendor). Firstly, the pre-trained model is used to generate pseudo-labels of confident target instances for classification, ignoring the instances with uncertain predictions caused by domain gap, creating a confident decision boundary on target domain. Secondly we train the model to make consistent predictions on similar inputs in target domain by observing that transformed copies of the same input image often share the same semantic, which can discriminate different semantics on target domain. Specifically, we maximize the mutual information between a pair of the similar input on extended label space. The objective is straightforward to implement and easy to optimize. The proposed two modules can mutually benefit each other and improve the adaptation performance.

We use the term open-set hypothesis transfer to indicate the setting wherein the adaptation on an open-set target domain does not use source domain data. To demonstrate the effectiveness of our approach, we conduct experiments on standard UODA benchmarks. Ablation study and hyper-parameters analysis verify the effectiveness of each component. Our method achieves state-of-the-art performances on Office-31, Office-Home and VisDA-2017 datasets.

\section{Related work}

\subsection{Open-set domain adaptation} The main challenge in UODA task is the existence of instances with novel classes in target domain, which will cause negative transfer if they are not properly handled. ATI-$\lambda$~\cite{Busto_2017_ICCV,8531764} first presents a method to assign target instances either source class labels or unknown label based on feature distance, and to employ open-set support vector machine~\cite{6809169} to inference on target domain.~\citet{baktashmotlagh2018learning} propose to model known and unknown samples into different subspace to separate them. Adversarial distribution alignment, popularly adopted in close-set DA, is also adapted to UODA by weighting each target instances in adversarial learning to reduce the impact of possible unknown class instances during alignment~\cite{Liu_2019_CVPR}. A different but popularly adopted solution to this problem is separating known and unknown target classes by adversarially learning a boundary between them~\cite{Saito_2018_ECCV}. This approach can be strengthened by considering aligning mean feature prototypes on class-level~\cite{Feng_2019_ICCV}.  

For source data agnostic adaptation,~\citet{inheritable_cvpr_2020} propose a feature splicing technique to train the source model, which generates out-of-distribution features for downstream unknown instance detection.~\cite{inheritable_cvpr_2020} requires the vendor to train the model with extra effort. Contrary to this approach, we only use a standard cross-entropy trained source model and focus on the target data structure.~\cite{liang2020really} assumes that target data representation is expected to lead definite predictions under a pre-trained classifier, and hence trains the representation function. However, the extension of this work on UODA task does not rely on this principle to detect unknown target instances. In this paper, we rely on semantic similarities in target data to detect every known and unknown target classes.

\subsection{Transformation consistency} Consistency among data transformations has been successfully used as an optimization objective in semi-supervised learning. In~\cite{NIPS2016_6333}, multiple random transformations with disturbances of the data are constrained to have similar prediction by minimizing the squared euclidean distance between outputs. It often achieves effective improvement when combined with other insights~\cite{laine2016temporal,NIPS2019_8749}. In recent years this transformation consistency is widely used in self-supervised representation learning under the form of contrastive loss function~\cite{arora2019theoretical}. Instead of minimizing the distance, it compares transformations with negative inputs (usually other data instances)~\cite{Wu_2018_CVPR,Images_2019_infomax,bachman2019amdim,tian2019contrastive,he2019momentum}, satisfying the relationship that transformation copies are more semantically similar than other data instances.

The structural information in data transformation has been explored in the filed of domain adaptation. Particularly,~\cite{sun2019unsupervised} and~\cite{8883232} consider self-supervisions for traditional close-set domain adaptation tasks, nevertheless these self-supervisions are used as a form of auxiliary domain alignment, which relies on the use of source domain data. On the contrary, our method explores the supervision from transformation consistency on target open-set domain data and directly connects it to classification with known and unknown categorical outputs.

\section{Open-set hypothesis transfer}

\begin{figure}[t]
	\centering
	\includegraphics[width=0.99\textwidth]{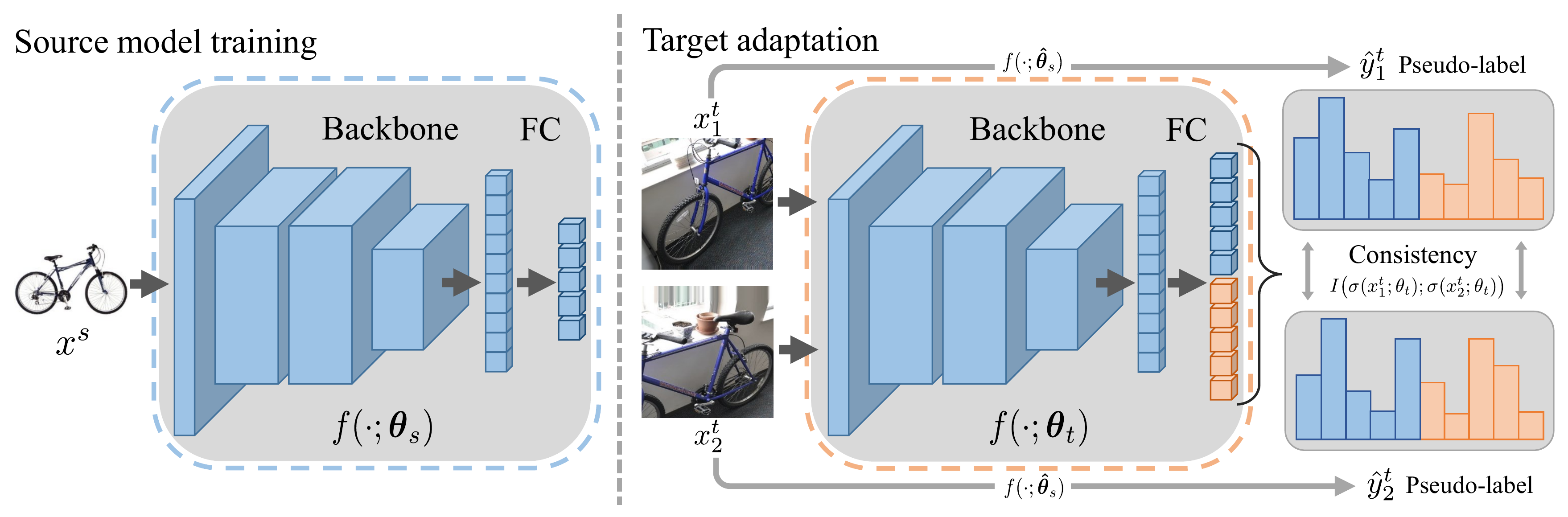}
	\caption{Illustration of the proposed framework. In our method for open-set hypothesis transfer, a network is first trained in source domain by standard cross-entropy loss. Then the pre-trained model is adapted to a target domain. We use the pre-trained model to generate pseudo-labels on confident target instances. Then the predictions of a pair of target instances created using transformation are constrained to be similar by maximizing their mutual information.}
	\label{fig:3.1}
\end{figure}

In this section, we describe the details of our approach. We first set up notations and introduce the problem setting. Then we explain the proposed objective functions of pseudo-label generation and transformation consistency, respectively. Figure~\ref{fig:3.1} illustrates the overall framework.

\subsection{Source domain model}

\noindent \textbf{Notation} \quad
Let $\mathcal{D}_s$ and $\mathcal{D}_t$ denote source and target distributions on data space $\mathcal{X}\times\mathcal{Y}$, and $\mathcal{D}_{s,X}$ and $\mathcal{D}_{t,X}$ be their marginal distribution on input space $\mathcal{X}$, respectively. Given a labeled source domain dataset $D_s=\{(x_i^s,y_i^s)\}_{i=1}^{n_s}\sim\mathcal{D}_s^{n_s}$ and a target unlabeled domain dataset $D_t=\{x_j^t\}_{j=1}^{n_t}\sim\mathcal{D}_{t,X}^{n_t}$, we learn a predictive model $f(\cdot;\theta)$ that outputs a score for each class $y\in\mathcal{Y}$ of an input $x\in\mathcal{X}$. In unsupervised open-set domain adaptation (UODA), the label space of source domain $C_s\subseteq\mathcal{Y}$ and target domain $C_t\subseteq\mathcal{Y}$ satisfy $C_s\subseteq C_t$.

The adaptation algorithm can only obtain a pre-trained model $\hat{f}_s$ from source domain (the vendor). For the $\lvert C_s\lvert$-way classification on source domain, the model $f(\cdot)\in\mathbb{R}^{\lvert C_s\lvert}$ is pre-trained with cross-entropy loss:
\begin{equation}
    L_S(\theta_s) = \frac{1}{n_s}\sum_{i=1}^{n_s}\ell_{ce}\big(\sigma(f(x_i^s;\theta_s)),y_i^s\big),
\end{equation}
where $\sigma$ is the softmax function and $\ell_{ce}$ is the cross-entropy loss for multiclass classification. Then the pre-trained model $\hat{f}_s=f(\cdot;\hat{\theta}_s)$, where $\hat{\theta}_s\in\argmin_{\theta_s} L_S(\theta_s)$, can be delivered to the target domain learner.

\subsection{Pseudo-label generation}

\label{sec:pseudo}

Due to data distribution shift, a model pre-trained on source domain will suffer degradation of performance on another domain. Particularly, the pre-trained model $\hat{f}_s$ cannot be used for an unseen open-set domain in UODA problem since both marginal input distribution $p(x)$ and conditional output distribution $p(y\vert x)$ change. The input distribution shift (\eg image style change) alone can reduce the accuracy of the predictions on known-class instances in target domain. In the meantime, target data of novel classes not belonging to source domain have to be picked out and classified as `unknown', which is not possible directly by the available source model $\hat{f}_s$.

To overcome these limitations and make $\hat{f}_s$ usable, we observe that although UODA task is challenging, it is possible to split $p(y\vert x)$ into $p(y_{known}\vert x_{known})$ and $p(y_{unknown}\vert x_{unknown})$, and tackle them individually. If some instances $x_{unknown}$ and $x_{known}$ are selected with high probability being true un/known, we are able to label $x_{unknown}$ as unknown since $p(y_{unknown}\vert x_{unknown})=1$ ($p(y_{known}\vert x_{unknown})=0$, respectively), and directly use $\hat{f}_s$ on $x_{known}$ through adaptation in the close-set setting since $p(y_{known}\vert x_{known})$ stays unchanged in UODA. To verify the feasibility of this separation and selection, we compare the predictions of true un/known target instances. Figure~\ref{fig:entropy} in section~\ref{sec:reliability} will show that known and unknown are statistically different under the confidence of $\hat{f}_s$'s prediction. Based on these observations, we leverage the source model $\hat{f}_s$ by taking the confidence of prediction into consideration, because $\hat{f}_s$ can still classify some target data successfully if the confidence of prediction is high. For example, if the output probability of certain class is far larger than any other classes, then this classification is possibly correct. On the other hand, if no probability of any class dominates the prediction, (\ie prediction is quite balanced), then this instance very likely belongs to unknown classes in target domain. 

We use $\hat{f}_s(\cdot)$ pre-trained on source domain to generate pseudo labels for confident target data. It has been shown that pseudo labels can be effectively harnessed in conventional close-set domain adaptation scenario~\cite{selfpaced,zhang2018collaborative,choi2019pseudo}. Formally, we use information entropy of predicted probability as measure of confidence. The entropy of the prediction on source domain categories $H\big(\sigma(\hat{f}_s(\cdot))\big)$ will be small if the prediction is confident, while it will be large when uncertain~\cite{You_2019_CVPR}. A target instance $x^t$ is considered to be a confident known source class instance if $H\big(\sigma(\hat{f}_s(x^t))\big)$ is smaller than a threshold $\delta_k$, and a confident unknown class instance if $H\big(\sigma(\hat{f}_s(x^t))\big)$ is larger than a threshold $\delta_u$. We will also explore other variants of the confidence measure in section~\ref{sec:alter}.

Thus, the pseudo-label $\hat{y}^t$ of a target instance $x^t$ is
\begin{equation}
    \hat{y}^t = 
    \begin{cases} 
        \argmax_{c\in C_s}\hat{f}_c(x^t)    & H\big(\sigma(\hat{f}_s(x^t))\big) \leq \delta_k \\
        \lvert C_S\rvert+1                  & H\big(\sigma(\hat{f}_s(x^t))\big) \geq \delta_u \\
        None                                & other 
    \end{cases}
\end{equation}
Target instances with uncertain predictions are discarded and no pseudo-labels are generated for them. Practically, the threshold are dependent on the number of source classes $\lvert C_s\rvert$ because of the dependence of entropy $H$ on the number of elements. Note that the maximum value of an entropy is $\log N$, we set $\delta_u=\nicefrac{\log\lvert C_s\rvert}{2}$ and $\delta_k=0.1\delta_u$ in this paper. Denote the pseudo-labeled target dataset with known classes and unknown class by $D_{t,k}=\big\{(x^t,\hat{y}^t)\vert H\big(\sigma(\hat{f}_s(x^t))\big) \leq \delta_k\big\}$ and $D_{t,u}=\big\{(x^t,\hat{y}^t)\vert H\big(\sigma(\hat{f}_s(x^t))\big) \geq \delta_u\big\}$, respectively. Next, we can train the model on $D_{t,k}$ and $D_{t,u}$ by minimizing the cross-entropy loss.

In order to make $\hat{f}_s$ usable for explicit inference on the target domain, we expand the model so that it has $\lvert C_s\lvert+K$ outputs and $f(\cdot;\theta_t)\in\mathbb{R}^{\lvert C_s\lvert+K}$. The expanded parameters $\theta_t$ consists of the original part $\theta_s$ and the expanded part $\theta_e$: $\theta_t=\theta_s\cup\theta_e$. For example, this could be expanding the last fully connected layer of a neural network to output $\lvert C_s\lvert+K$ logits. The first $\lvert C_s\lvert$ logits are those used for classification on source dataset. Here $K$ is an approximation of the number of target unknown classes $\lvert C_t\rvert-\lvert C_s\rvert$. Note that the expanded model has more than one output ($K$) for unknown classes. For this part, in order to represent the probability of the unknown class, we simply add the probability values on these $K$ indices. The objective with cross-entropy loss is then
\begin{equation}
    L_P(\theta_t) = \frac{1}{\lvert D_{t,k}\rvert}\sum_{x^t\in D_{t,k}}\ell_{ce}(\sigma(f(x^t;\theta_t)),\hat{y}^t) - \frac{1}{\lvert D_{t,u}\rvert}\sum_{x^t\in D_{t,u}}\log\sum_{c=\lvert C_s\lvert+1}^{\lvert C_s\lvert+K}\sigma_c(f(x^t;\theta_t)),
    \label{eq:pseudo_label}
\end{equation}
where $v_c(\cdot)$ means the $c$-th component of the vector outputted by the function $v(\cdot)$. By optimizing this objective, the model will be trained to further classify confident target instances and enlarge the decision margin on these confident instances.

\subsection{Transformation consistency}

In order to extract the semantic information of unlabeled target data as much as possible, we create different transformed copies of one input and enforce that their predictions are consistent. This transformation can be achieved through data augmentation techniques like noise, crop and affine transformation, which does not alter class semantics of inputs. The predictions of these copies should be both similar and definite, so that the model is capable of distinguishing both similar and dissimilar class semantics. Discrepancy measure like KL divergence and $\ell_2$ distance has been in close-set setting. However, in the open-set scenario, target domain contains more than one unknown classes. Therefore, besides giving scores for $\lvert C_s\rvert$ source classes, the model should also give $K$ more scores, which could hurt the model by causing ambiguity both between known and unknown classes and between different unknown classes if simply enforcing these aforementioned distances.

Let the random variable $\tilde{y}^t$ denote the predicted class of $x^t$ by $f$ according to $\argmax_{c\in[\lvert C_s\lvert+K]}f_c(x^t)$. Then the probability of $\tilde{y}^t$ being class $c$ conditioned on $x^t$ is $p(c\vert x^t)=\sigma_c(f(x^t;\theta_t))$, where we drop the term $\tilde{y}^t$ when it is clear from the context. We wish to make the prediction of $x^t$ and its augmentation $x_+^t$ as similar as possible, which can be achieved through maximizing the mutual information $I(\tilde{y}^t;\tilde{y}_+^t)$ between $\tilde{y}^t$ and $\tilde{y}_+^t$:
\begin{equation}
    \label{eq:mi}
	\min_{\theta_t}-I(\tilde{y}^t;\tilde{y}_+^t).
\end{equation}
With this objective, the model has to give similar and definite predictions for each pair~\cite{Ji_2019_ICCV}. Since the transformation guarantees the semantic similarity in every pair, then semantically similar images (usually in same class) have to be classified into same category. As a result, images similar to source classes are pushed to the source categories. For the unknown target classes, the model can discover instances that are dissimilar to source classes and classify these to different unknown categories. The following theorem gives an explanation on why maximizing this mutual information can lead to discriminative predictions.
\begin{proposition}
	\label{thm:mi}
    Assume that transformation $x_+^t$ does not lose the information of its true label $y^t$ and contains no other information in $x^t$, \ie $x_+^t=\argmin_{x'}I(x^t;x'),\,\textrm{s.t.}\, I(x^t;y^t)=I(x';y^t)$, then
    \begin{equation}
        I(\tilde{y}^t;\tilde{y}_+^t) \leq I(\tilde{y}^t;y^t).
    \end{equation}
\end{proposition}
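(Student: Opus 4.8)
The plan is to treat the quantities probabilistically in the natural way: $y^t$ is the true label, $x^t$ the input, and $x_+^t$ a (possibly randomized) transformation of $x^t$ whose generating map does not look at the label, so that $y^t - x^t - x_+^t$ is a Markov chain; the predictions are produced as (possibly randomized) functions $\tilde y^t=g(x^t,U)$ and $\tilde y_+^t=g(x_+^t,U_+)$ with fresh, mutually independent randomness $U,U_+$, consistently with $p(c\mid x^t)=\sigma_c(f(x^t;\theta_t))$, so that $\tilde y^t$ depends on $x^t$ alone and $\tilde y_+^t$ on $x_+^t$ alone. The target inequality is \emph{not} obtainable from the data-processing inequality applied to the chain $\tilde y^t - x^t - x_+^t - \tilde y_+^t$ (that route only yields the weaker bound $I(\tilde y^t;\tilde y_+^t)\le I(x^t;y^t)$); the point is instead to show that $y^t$ ``screens off'' the two predictions, i.e. $I(\tilde y^t;\tilde y_+^t\mid y^t)=0$, and then finish with a one-line chain-rule argument.

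First I would unpack the hypothesis on $x_+^t$ into two facts. The constraint $I(x';y^t)=I(x^t;y^t)$ says $x_+^t$ is sufficient for $y^t$: $I(x_+^t;y^t)=I(x^t;y^t)$. Combining this with $I(y^t;x_+^t\mid x^t)=0$ (the augmentation ignores the label) and the two expansions of $I(y^t;x^t,x_+^t)$ forces $I(y^t;x^t\mid x_+^t)=0$, so $y^t - x_+^t - x^t$ is also Markov. Next, the data-processing inequality on $y^t - x^t - x_+^t$ gives $I(x^t;x')\ge I(x';y^t)=I(x^t;y^t)$ for every feasible transformation $x'$ of $x^t$, so the minimum in the hypothesis is $\ge I(x^t;y^t)$; in the classification setting where $y^t$ is a deterministic function of $x^t$, the candidate $x'=y^t$ is feasible and meets this bound, so the minimum equals $I(x^t;y^t)$, and since $x_+^t$ attains it we get $I(x^t;x_+^t)=I(x^t;y^t)$. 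Substituting into the identity $I(x^t;x_+^t)=I(x^t;y^t)+I(x^t;x_+^t\mid y^t)$ (which itself uses $I(y^t;x^t\mid x_+^t)=0$ from the previous step) yields $I(x^t;x_+^t\mid y^t)=0$, that is, $x^t\perp x_+^t\mid y^t$. Because $\tilde y^t$ is a function of $(x^t,U)$ and $\tilde y_+^t$ of $(x_+^t,U_+)$ with $U,U_+$ independent of everything and of each other, conditional independence passes through to $\tilde y^t\perp\tilde y_+^t\mid y^t$, i.e. $I(\tilde y^t;\tilde y_+^t\mid y^t)=0$.

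Then I would finish: by monotonicity of mutual information under enlarging the second argument and the chain rule,
\[
I(\tilde y^t;\tilde y_+^t)\;\le\; I\big(\tilde y^t;(y^t,\tilde y_+^t)\big)\;=\;I(\tilde y^t;y^t)+I(\tilde y^t;\tilde y_+^t\mid y^t)\;=\;I(\tilde y^t;y^t),
\]
which is exactly the claimed bound.

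The main obstacle is the middle step, namely extracting $x^t\perp x_+^t\mid y^t$ from the somewhat informal ``contains no other information'' hypothesis: one must argue that the minimum of $I(x^t;x')$ over sufficient transformations $x'$ is exactly $I(x^t;y^t)$ and that $x_+^t$ realizes it. This is cleanest under the standard classification assumption that $y^t$ is determined by $x^t$ (so $x'=y^t$ is an admissible minimizer); in the noisy-label case the equality $I(x^t;x_+^t)=I(x^t;y^t)$ should simply be read as the idealized content of the hypothesis. Everything else is routine chain-rule bookkeeping plus the fact that measurable functions of conditionally independent inputs remain conditionally independent.
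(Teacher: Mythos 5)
Your proof is correct, but it reaches the bound by a genuinely different route than the paper. Both arguments share the same first step: under the Markov chain $y^t\rightarrow x^t\rightarrow x_+^t$ (augmentation ignores the label), the data-processing inequality plus the feasibility constraint force $I(x^t;x_+^t)=I(x^t;y^t)=I(x_+^t;y^t)$ at the minimizer; the paper simply asserts this equality, whereas you flag the attainment issue explicitly and resolve it under the deterministic-label reading, which is if anything more careful than the original. After that the two proofs diverge. The paper stays with chain-rule expansions of $I(x^t;x_+^t)$ and $I(y^t;x^t)$ through $\tilde{y}^t$ and $\tilde{y}_+^t$, and the heart of its argument is a log-sum-inequality computation showing $I(y^t;x^t\vert\tilde{y}^t)\leq I(x^t;x_+^t\vert\tilde{y}^t)$, after which the terms are combined and the residual $I(\tilde{y}^t;x_+^t\vert\tilde{y}_+^t)$ is dropped. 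You instead convert the equality into the screening-off statement $I(x^t;x_+^t\vert y^t)=0$ (via $I(y^t;x^t\vert x_+^t)=0$ and the decomposition $I(x^t;x_+^t)=I(x^t;y^t)+I(x^t;x_+^t\vert y^t)$), propagate conditional independence to the predictions $\tilde{y}^t\perp\tilde{y}_+^t\vert y^t$ using the fresh-randomness assumption (an assumption the paper also uses implicitly, e.g.\ in writing $I(x_+^t;\tilde{y}^t)=I(\tilde{y}^t;x_+^t,\tilde{y}_+^t)$ and $p(y,x,\tilde{y})=p(\tilde{y}\vert x)p(x\vert y)p(y)$), and finish with the one-line bound $I(\tilde{y}^t;\tilde{y}_+^t)\leq I(\tilde{y}^t;y^t,\tilde{y}_+^t)=I(\tilde{y}^t;y^t)$. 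Your version is shorter and more transparent about where the hypothesis enters (it is exactly what makes $y^t$ screen off the two views), while the paper's computation avoids stating the conditional-independence lemma at the cost of a heavier log-sum-inequality manipulation; your opening observation that naive double data processing only gives the weaker bound $I(\tilde{y}^t;\tilde{y}_+^t)\leq I(x^t;y^t)$ correctly identifies why such an extra ingredient is needed in either route.
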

\noindent In other words, if the augmentation is perfect in terms of the definition in Proposition~\ref{thm:mi} with the only information shared between $x^t$ and $x_+^t$ is $y^t$, then with objective~\ref{eq:mi} we are maximizing a lower bound of the mutual information between the prediction $\tilde{y}^t$ and its true label $y^t$, making the prediction discriminative towards target label.

Recall that $I(\tilde{y}^t;\tilde{y}_+^t)=H(\tilde{y}^t)-H(\tilde{y}^t\vert \tilde{y}_+^t)$. When $H(\tilde{y}^t)$ is maximized, then the prediction can have a more balanced distribution on both source classes and unknown classes, potentially avoiding the degenerate solution where all target data are classified into only a subset of all classes, especially the known ones. Therefore, in practice we seek an objective $\beta H(\tilde{y}^t)-H(\tilde{y}^t\vert \tilde{y}_+^t)$ that balances between the entropy and the conditional entropy term. Large $\beta$ will encourage the model to predict target data into different classes with equal proportion. According to the symmetry of this factorization (section~\ref{sec:app_mi}), the objective can be written as
\begin{align}
& L_{C,\beta}(\theta_t) = -I_{\beta}(\tilde{y}^t;\tilde{y}_+^t) = -\sum_{c=1}^{\lvert C_s\lvert+K}\sum_{c_+=1}^{\lvert C_s\lvert+K} p(\tilde{y}^t=c,\tilde{y}_+^t=c_+) \log\frac{p(\tilde{y}^t=c,\tilde{y}_+^t=c_+)}{\big(p(\tilde{y}^t=c)p(\tilde{y}_+^t=c_+)\big)^{\frac{\beta+1}{2}}} \label{eq:mi_raw} \\ 
= & -\sum_{c=1}^{\lvert C_s\lvert+K}\sum_{c_+=1}^{\lvert C_s\lvert+K} \int p(c\vert x^t)p(c_+\vert x_+^t)\diff p(x^t,x_+^t) \log\frac{\int p(c\vert x^t)p(c_+\vert x_+^t)\diff p(x^t,x_+^t)}{\big(\int p(c\vert x^t)\diff p(x^t)\int p(c_+\vert x_+^t)\diff p(x_+^t)\big)^{\frac{\beta+1}{2}}}. \notag
\end{align}

In computation, the joint and marginal distributions in Eq.~(\ref{eq:mi_raw}) are approximated using empirical distributions from datasets to calculate the estimate $\hat{I}_{\beta}(\tilde{y}^t;\tilde{y}_+^t)$, where $p(c,c_+)\approx P_{c,c_+}:=\frac{1}{n_t}\sum_{j=1}^{n_t}p(c\vert x_j^t)p(c_+\vert x_{j,c_+}^t)$ and $p(c)\approx P_c:=\frac{1}{n_t}\sum_{j=1}^{n_t}p(c\vert x_j^t)$. The objective for optimization is
\begin{equation}
\hat{L}_{C,\beta}(\theta_t) = -\hat{I}_{\beta}(\tilde{y}^t;\tilde{y}_+^t) = -\sum_{c=1}^{\lvert C_s\lvert+K}\sum_{c_+=1}^{\lvert C_s\lvert+K}P_{c,c_+}\log\frac{P_{c,c_+}}{(P_cP_{c_+})^{\frac{\beta+1}{2}}}.
\label{eq:consistency}
\end{equation}
Although these distributions are approximated separately, the following proposition shows its consistency.
\begin{proposition}
	\label{thm:consistency}
	The estimator $\hat{I}_{\beta}(\tilde{y}^t;\tilde{y}_+^t)$ of $I_{\beta}(\tilde{y}^t;\tilde{y}_+^t)$ is consistent.
\end{proposition}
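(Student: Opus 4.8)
The plan is to exhibit $\hat I_\beta$ as a \emph{fixed} continuous function evaluated at a finite vector of sample means, and then combine the strong law of large numbers with the continuous mapping theorem. Collect the plug-in quantities into $\mathbf P = (\{P_{c,c_+}\}_{c,c_+},\{P_c\}_c,\{P_{c_+}\}_{c_+})$ and their population counterparts into $\mathbf p = (\{p(c,c_+)\}_{c,c_+},\{p(c)\}_c,\{p(c_+)\}_{c_+})$, and observe from Eq.~(\ref{eq:mi_raw}) and Eq.~(\ref{eq:consistency}) that $\hat I_\beta = g(\mathbf P)$ and $I_\beta = g(\mathbf p)$, where $g(\mathbf q) = \sum_{c,c_+} q_{c,c_+}\log\frac{q_{c,c_+}}{(q_c q_{c_+})^{(\beta+1)/2}}$. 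It then suffices to show (i) $\mathbf P\to\mathbf p$ almost surely as $n_t\to\infty$, and (ii) $g$ is continuous at $\mathbf p$.

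\textbf{Step 1: law of large numbers.}
Fix the parameter $\theta_t$, so that $x\mapsto p(c\mid x)=\sigma_c(f(x;\theta_t))$ is a fixed measurable map into $[0,1]$. The inputs $x_1^t,\dots,x_{n_t}^t$ are i.i.d.\ from $\mathcal D_{t,X}$ and the transformed copies $x_{j,+}^t$ are drawn independently (given $x_j^t$) from the augmentation kernel, so the pairs $(x_j^t,x_{j,+}^t)$ are i.i.d.\ draws from the joint law $p(x^t,x_+^t)$. Hence, for each fixed $(c,c_+)$ the summands $p(c\mid x_j^t)\,p(c_+\mid x_{j,+}^t)$ are i.i.d., valued in $[0,1]$, with mean $p(c,c_+)=\int p(c\mid x^t)p(c_+\mid x_+^t)\diff p(x^t,x_+^t)$; the strong law of large numbers gives $P_{c,c_+}\to p(c,c_+)$ a.s., and likewise $P_c\to p(c)$ and $P_{c_+}\to p(c_+)$ a.s. Since the index set is finite (of order $(\lvert C_s\rvert+K)^2$), all these limits hold simultaneously on a single almost-sure event, i.e.\ $\mathbf P\to\mathbf p$ a.s.

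\textbf{Step 2: continuity of $g$.}
Write each summand of $g(\mathbf q)$ as $\phi(q_{c,c_+})-\tfrac{\beta+1}{2}\,q_{c,c_+}\log q_c-\tfrac{\beta+1}{2}\,q_{c,c_+}\log q_{c_+}$, where $\phi(t)=t\log t$ with $\phi(0):=0$ is continuous on $[0,1]$. The only threat to continuity is a marginal $q_c$ or $q_{c_+}$ hitting $0$. This is controlled by the pointwise bounds $P_{c,c_+}\le P_c$ and $P_{c,c_+}\le P_{c_+}$, immediate from $p(c_+\mid x_{j,+}^t)\le 1$ (resp.\ $p(c\mid x_j^t)\le 1$) inside the averages, and holding identically for the population quantities. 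Thus both $\mathbf P$ and $\mathbf p$ lie in $\mathcal Q:=\{\mathbf q:\ q_{c,c_+},q_c,q_{c_+}\in[0,1],\ q_{c,c_+}\le\min(q_c,q_{c_+})\ \forall c,c_+\}$, on which $\lvert q_{c,c_+}\log q_c\rvert\le q_c\lvert\log q_c\rvert=\lvert\phi(q_c)\rvert\to0$ as $q_c\to0$ (the constraint forcing $q_{c,c_+}\to0$ there as well), and similarly for $q_{c,c_+}\log q_{c_+}$. Hence every summand, and therefore $g$, is continuous on $\mathcal Q$, in particular at $\mathbf p$.

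\textbf{Conclusion and main obstacle.}
Steps 1 and 2 together with the continuous mapping theorem give $\hat I_\beta=g(\mathbf P)\to g(\mathbf p)=I_\beta$ almost surely, hence in probability, so $\hat I_\beta$ is a (strongly) consistent estimator of $I_\beta$. The only delicate point is Step~2: a naive treatment of the term $q_{c,c_+}\log(q_c q_{c_+})$ has no reason to remain continuous when $q_c,q_{c_+}\to0$, and the resolution is precisely to observe that the separately-formed plug-in estimates are automatically confined to the region $\mathcal Q$ cut out by the monotonicity inequalities $P_{c,c_+}\le P_c,P_{c_+}$, where the $t\log t$-type continuity applies. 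Everything else is routine; no uniform-convergence or empirical-process machinery is needed because the label index set is finite and $p(\cdot\mid x)$ is bounded.
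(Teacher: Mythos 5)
Your proof is correct and follows essentially the same route as the paper's: express $\hat I_\beta$ as a fixed function of finitely many sample means, apply the strong law of large numbers to each, and conclude by the continuous mapping theorem. The only difference is cosmetic — where you handle the potential singularity of $\log$ at zero via the structural constraint $P_{c,c_+}\le\min(P_c,P_{c_+})$, the paper simply notes that $\sigma_c(f(\cdot))>0$ forces all the empirical and population probabilities to be strictly positive, so $\log x$ and $x\log x$ are continuous at the relevant limit points; both resolutions are valid.
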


\noindent \textbf{Complete model}\quad The complete objective functions consist of both pseudo-label classification~(Eq.~(\ref{eq:pseudo_label})) and transformation consistency~(Eq.~(\ref{eq:consistency}))
\begin{equation}
    \min_{\theta_t} \alpha_pL_P(\theta_t) + \alpha_c\hat{L}_{C,\beta}(\theta_t),
\end{equation}
where $\alpha_p$ and $\alpha_c$ are hyperparameters for loss balancing. Before training on target domain, $f(\cdot;\theta_t)$ is initialized by $\hat{f}_s$.

During inference, the probability $\sigma_{1:\lvert C_s\lvert}(f(x^t))$ and $\sum_{c=\lvert C_s\lvert+1}^{\lvert C_s\lvert+K}\sigma_c(f(x^t))$ are compared to classify a target instance either into a known class or the unknown class.

\section{Experiments}

We conduct experiments on standard UODA datasets to demonstrate the effectiveness of our approach and compare it with state-of-the-art methods. The standard datasets include Office-31~\cite{10.1007/978-3-642-15561-1_16}, Office-Home~\cite{Venkateswara_2017_CVPR} and VisDA-2017~\cite{peng2017visda}. Office-31 has 3 domains (Amazon, DSLR and Webcam) with 4,652 images in 31 shared classes. Office-Home contains 4 domains (Artistic, Clipart, Product and Real-world) with around 15k images in 65 shared classes. Tasks are constructed from each two domains in both directions, forming 6 and 12 adaptation scenarios, respectively. For VisDA-2017 the task is adaptation from Synthetic domain (150k synthetic images) to Real domain (55k real images). We follow previous work~\cite{Saito_2018_ECCV,Liu_2019_CVPR,inheritable_cvpr_2020} to partition known and unknown classes in all domains.

\subsection{Implementation details}

Following the experimental settings in~\cite{Liu_2019_CVPR} and~\cite{inheritable_cvpr_2020}, we choose ResNet-50 as the backbone network for Office-31 and Office-Home, and VGG-16 for VisDA-2017 dataset. The value of $K$ is set as 70 and $\beta$ is set as 1.3 throughout all datasets. The influence of these two hyperparameters is analyzed in Section~\ref{sec:hyper}. We use a batch-size of 64 with 32 on source domain and 32 on target domain. All networks are trained with momentum SGD using a learning rate 0.0005, a momentum 0.9 and a weight decay 0.0005. On source domain, the network is initialized with an ImageNet pre-trained model as it is common practice. ImageNet pre-trained layers have learning rate divided by 10. On target domain, the network is initialized with the source domain pre-trained model for the inherited parameters ($\theta_s$), while the expanded parameters ($\theta_e$) are trained from scratch. These learning parameters and setting are same as~\cite{Liu_2019_CVPR}, which is quite standard in open-set domain adaptation. For all experiments, we set the loss balancing parameter $\alpha_p=0.1$ and $\alpha_c=1$.

To evaluate performance, average per-class accuracy including all unknown as one class (OS) and average per-class accuracy only on the known classes (OS$^*$) are reported. Besides OS and OS$^*$, we also report the accuracy on the whole target domain for some comparisons. Since OS and OS$^*$ are averages of per-class accuracies, they do not reflect the overall precision when classes are imbalanced.

\subsection{State-of-the-art comparison}

\begin{table}
	\small
	\centering
	\caption{Average classification accuracy ($\%$) on Office-31 (ResNet-50).}
	\label{table:office31}
	\renewcommand{\tabcolsep}{3.5pt}
	\begin{tabular}{lccccccccccccccc} \toprule
		\multirow{2}{*}{\textbf{Method\textbackslash Task}} && \multicolumn{2}{c}{A$\rightarrow$D} & \multicolumn{2}{c}{A$\rightarrow$W} & \multicolumn{2}{c}{D$\rightarrow$A} & \multicolumn{2}{c}{D$\rightarrow$W} & \multicolumn{2}{c}{W$\rightarrow$A} & \multicolumn{2}{c}{W$\rightarrow$D} & \multicolumn{2}{c}{Avg} \\ \cmidrule{3-16}
		&& OS & OS$^*$ & OS & OS$^*$ & OS & OS$^*$ & OS & OS$^*$ & OS & OS$^*$ & OS & OS$^*$ & OS & OS$^*$ \\ \midrule
		ResNet									&& 85.2 & 85.5 & 82.5 & 82.7 & 71.6 & 71.5 & 94.1 & 94.3 & 75.5 & 75.2 & 96.6 & 97.0 & 84.2 & 84.4 \\
		RTN~\cite{NIPS2016_6110}				&& 89.5 & 90.1 & 85.6 & 88.1 & 72.3 & 72.8 & 94.8 & 96.2 & 73.5 & 73.9 & 97.1 & 98.7 & 85.4 & 86.8 \\
		DANN~\cite{pmlr-v37-ganin15}			&& 86.5 & 87.7 & 85.3 & 87.7 & 75.7 & 76.2 & \textbf{97.5} & \textbf{98.3} & 74.9 & 75.6 & \textbf{99.5} & \textbf{100.0} & 86.6 & 87.6 \\
		OpenMax~\cite{Bendale_2016_CVPR}		&& 87.1 & 88.4 & 87.4 & 87.5 & 83.4 & 82.1 & 96.1 & 96.2 & 82.8 & 82.8 & 98.4 & 98.5 & 89.0 & 89.3 \\
		ATI-$\lambda$~\cite{Busto_2017_ICCV}	&& 84.3 & 86.6 & 87.4 & 88.9 & 78.0 & 79.6 & 93.6 & 95.3 & 80.4 & 81.4 & 96.5 & 98.7 & 86.7 & 88.4 \\
		OSBP~\cite{Saito_2018_ECCV}				&& 88.6 & 89.2 & 86.5 & 87.6 & 88.9 & 90.6 & \underline{97.0} & 96.5 & 85.8 & 84.9 & 97.9 & 98.7 & 90.8 & 91.3 \\
		STA~\cite{Liu_2019_CVPR}				&& 93.7 & 96.1 & 89.5 & 92.1 & 89.1 & \underline{93.5} & \textbf{97.5} & 96.5 & 87.9 & 87.4 & \textbf{99.5} & \underline{99.6} & 92.9 & 94.1 \\ \midrule
		FS~\cite{inheritable_cvpr_2020}			&& \underline{94.2} & \underline{97.1} & \underline{91.3} & \underline{93.2} & \underline{90.1} & 91.5 & 96.5 & 97.4 & \underline{88.7} & \underline{88.1} & \textbf{99.5} & 99.4 & \underline{93.4} & \underline{94.5} \\ \midrule
		Ours									&& \textbf{96.6} & \textbf{97.7} & \textbf{95.7} & \textbf{96.4} & \textbf{93.1} & \textbf{93.6} & \textbf{97.5} & \underline{98.0} & \textbf{92.9} & \textbf{93.6} & \underline{99.3} & \textbf{100.0} & \textbf{95.8} & \textbf{96.6} \\ \bottomrule
	\end{tabular}
\end{table}

\begin{table}
	\small
	\centering
	\caption{Average classification accuracy OS ($\%$) on Office-Home (ResNet-50).}
	\label{table:officehome}
	\renewcommand{\tabcolsep}{3.5pt}
	\begin{tabular}{lccccccccccccccc} \toprule
		\textbf{Method\textbackslash Task} && A$\rightarrow$C & A$\rightarrow$P & A$\rightarrow$R & C$\rightarrow$A & C$\rightarrow$P & C$\rightarrow$R & P$\rightarrow$A & P$\rightarrow$C & P$\rightarrow$R & R$\rightarrow$A & R$\rightarrow$C & R$\rightarrow$P & Avg \\ \midrule
		ResNet          						&& 53.4 & 69.3 & 78.7 & 61.4 & 61.8 & 71.0 & 64.0 & 52.7 & 74.9 & 70.0 & 51.9 & 74.1 & 65.3 \\ 
		ATI-$\lambda$~\cite{Busto_2017_ICCV}	&& 55.2 & 69.1 & 79.2 & 61.7 & 63.5 & 72.9 & 64.5 & 52.6 & 75.8 & 70.7 & 53.5 & 74.1 & 66.1 \\
		DANN~\cite{pmlr-v37-ganin15}			&& 54.6 & 69.5 & 80.2 & 61.9 & 63.5 & 71.7 & 63.3 & 49.7 & 74.2 & 71.3 & 51.9 & 72.9 & 65.4 \\
		OSBP~\cite{Saito_2018_ECCV}				&& 56.7 & 67.5 & 80.6 & 62.5 & 65.5 & 74.7 & 64.8 & 51.5 & 71.5 & 69.3 & 49.2 & 74.0 & 65.7 \\
		OpenMax~\cite{Bendale_2016_CVPR}		&& 56.5 & 69.1 & 80.3 & \underline{64.1} & 64.8 & 73.0 & 64.0 & 52.9 & 76.9 & 71.2 & 53.7 & 74.5 & 66.7 \\
		STA~\cite{Liu_2019_CVPR}				&& 58.1 & 71.6 & \underline{85.0} & 63.4 & 69.3 & 75.8 & 65.2 & 53.1 & 80.8 & \underline{74.9} & 54.4 & \textbf{81.9} & 69.5 \\ \midrule
		FS~\cite{inheritable_cvpr_2020}			&& 60.1 & 70.9 & 83.2 & 64.0 & 70.0 & 75.7 & \underline{66.1} & 54.2 & 81.3 & \underline{74.9} & 56.2 & 78.6 & 69.6 \\ 
		SHOT~\cite{liang2020really}				&& \underline{60.5} & \textbf{80.4} & 82.6 & 59.2 & \underline{73.6} & \underline{77.2} & 63.4 & \underline{54.7} & \underline{82.3} & 69.5 & \underline{61.8} & \underline{81.8} & \underline{70.6} \\ \midrule
		Ours            						&& \textbf{60.6} & \underline{80.1} & \textbf{86.5} & \textbf{71.8} & \textbf{74.1} & \textbf{81.6} & \textbf{72.5} & \textbf{59.1} & \textbf{83.7} & \textbf{77.0} & \textbf{62.2} & \underline{81.8} & \textbf{74.3} \\ \bottomrule
	\end{tabular}
\end{table}

\begin{table}
    \small
    \centering
    \caption{Classification accuracy ($\%$) on VisDA-2017 (VGGNet).}
    \label{table:visda}
    \renewcommand{\arraystretch}{0.9}
    \begin{tabular}{lccccccccc} \toprule
        \multirow{2}{*}{\textbf{Method\textbackslash Class}}  && \multicolumn{8}{c}{Synthetic$\rightarrow$Real} \\ \cmidrule{3-10}
                                                && bicycle & bus & car & m-cycle & train & truck & OS & OS$^*$ \\ \midrule
        OSVM~\cite{OSVM}						&& 31.7 & 51.6 & \underline{66.5} & 70.4 & \textbf{88.5} & 20.8 & 52.5 & 54.9 \\
        MMD+OSVM        						&& 39.0 & 50.1 & 64.2 & 79.9 & 86.6 & 16.3 & 54.4 & 56.0 \\
        DANN+OSVM       						&& 31.8 & 56.6 & \textbf{71.7} & 77.4 & \underline{87.0} & 22.3 & 55.5 & 57.8 \\
        ATI-$\lambda$~\cite{Busto_2017_ICCV}	&& 46.2 & 57.5 & 56.9 & 79.1 & 81.6 & \underline{32.7} & 59.9 & 59.0 \\
        OSBP~\cite{Saito_2018_ECCV}	            && 51.1 & 67.1 & 42.8 & 84.2 & 81.8 & 28.0 & 62.9 & 59.2 \\
        STA~\cite{Liu_2019_CVPR}				&& 52.4 & \textbf{69.6} & 59.9 & \textbf{87.8} & 86.5 & 27.2 & 66.8 & 63.9 \\ \midrule
        FS~\cite{inheritable_cvpr_2020}			&& \underline{53.5} & 69.2 & 62.2 & \underline{85.7} & 85.4 & 32.5 & \underline{68.1} & \underline{64.7} \\ \midrule
        Ours            						&& \textbf{76.4} & \underline{69.4} & 52.1 & 81.5 & 76.7 & \textbf{43.6} & \textbf{70.8} & \textbf{66.6} \\ \bottomrule
    \end{tabular}
\end{table}

\noindent \textbf{Office-31}\quad Results on Office-31 are shown in Table~\ref{table:office31}. Due to space limitation, the standard deviations are reported in appendix. Here we mainly compare with the method of training with feature-splicing~\cite{inheritable_cvpr_2020} (the FS entry in the table). The proposed model outperforms Feature-splicing as well as those previous methods in the setting where source data are available. 

\noindent \textbf{Office-Home}\quad Table~\ref{table:officehome} shows the experimental results on Office-Home dataset. We compare with two methods for the hypothesis transfer UODA setting, Feature-splicing~\cite{inheritable_cvpr_2020} and SHOT~\cite{liang2020really}, and reports results on OS following previous work. Nevertheless, we report the standard deviation and OS* result of our method in appendix. From the table we can see our approach achieves improved performance both in most adaptation scenarios and averagely.

\noindent \textbf{VisDA-2017}\quad Table~\ref{table:visda} summarizes the per-class accuracies, OS and OS$^*$ scores on VisDA-2017 dataset. It can be seen from the table that our method achieves a rather balanced per-class accuracies, while many previous methods have either high or low accuracy for some classes. Moreover, VisDA is a large-scale domain adaptation dataset with large domain gap. These results demonstrate that our approach is capable of scaling to challenging scenarios.

\subsection{Analysis}

\subsubsection{Ablation study}

\begin{table}
    \small
    \centering
	\caption{Ablation study of each component on Office-31 (ResNet-50).}
	\label{table:ablation}
    \renewcommand{\tabcolsep}{1.2pt}
    \begin{tabular}{lccccccccccccccccccccc} \toprule
        \multirow{2}{*}{\textbf{M}} & \multicolumn{3}{c}{A$\rightarrow$D} & \multicolumn{3}{c}{A$\rightarrow$W} & \multicolumn{3}{c}{D$\rightarrow$A} & \multicolumn{3}{c}{D$\rightarrow$W} & \multicolumn{3}{c}{W$\rightarrow$A} & \multicolumn{3}{c}{W$\rightarrow$D} & \multicolumn{3}{c}{Avg} \\ \cmidrule{2-22}
        & OS & OS$^*$ & Acc & OS & OS$^*$ & Acc & OS & OS$^*$ & Acc & OS & OS$^*$ & Acc & OS & OS$^*$ & Acc & OS & OS$^*$ & Acc & OS & OS$^*$ & Acc \\ \midrule
        pl      & 88.0 & 87.8 & 89.0 & 80.4 & 79.3 & 85.2 & 66.1 & 63.3 & 78.7 & 85.5 & 84.8 & 88.4 & 69.7 & 67.4 & 80.2 & 98.5 & 98.7 & 95.4 & 81.4 & 80.2 & 86.2 \\
        tc      & 97.1 & 98.6 & 89.4 & 94.9 & 95.6 & 91.8 & 92.6 & 94.2 & 85.5 & 98.0 & 99.2 & 93.1 & 92.6 & 94.2 & 85.2 & 97.6 & 100.0 & 85.9 & 95.5 & 97.0 & 88.5 \\
        full    & 96.6 & 97.7 & 90.8 & 95.7 & 96.4 & 92.5 & 93.1 & 93.6 & 90.7 & 97.5 & 98.0 & 95.0 & 92.9 & 93.6 & 89.5 & 99.3 & 100.0 & 95.8 & 95.8 & 96.6 & 92.4 \\ \bottomrule
    \end{tabular}
\end{table}

\begin{table}
	\small
	\centering
	\caption{Ablation study and alternative designs on VisDA-2017 (VGGNet).}
	\label{table:alter}
	\renewcommand{\arraystretch}{0.9}
	\begin{tabular}{lcccccccccc} \toprule
		\multirow{2}{*}{\textbf{Method\textbackslash Class}}  && \multicolumn{9}{c}{Synthetic$\rightarrow$Real} \\ \cmidrule{3-11}
        && bicycle & bus & car & m-cycle & train & truck & OS & OS$^*$ & Acc \\ \midrule
        pl              && 8.8 & 55.3 & 55.3 & 89.7 & 65.6 & 1.3 & 47.8 & 46.0 & 52.7 \\
		tc              && 76.5 & 85.5 & 53.3 & 76.7 & 74.2 & 0.8 & 65.6 & 61.1 & 71.2 \\ 
        full            && 76.4 & 69.4 & 52.1 & 81.5 & 76.7 & 43.6 & 70.8 & 66.6 & 75.6 \\ \midrule
        full (cosine)   && 0.0 & 0.0 & 2.5 & 83.3 & 0.0 & 0.0 & 24.1 & 14.3 & 40.9 \\
        full (max prob) && 76.5 & 75.3 & 49.7 & 74.9 & 76.1 & 41.1 & 69.6 & 65.6 & 74.3 \\ \bottomrule
	\end{tabular}
\end{table}

We do ablation study on both Office-31 and VisDA dataset to show the effect of each component in our model. The proposed model consists of pseudo-label generation (pl entry in the table) and transformation consistency (tc entry in the table) and we compare the full model (full entry in the table) with them. Particularly, they are compared on the metric of OS, OS$^*$ and total accuracy (Acc) for each adaptation scenario to reflect different aspects.

Results on Office-31 are shown in Table~\ref{table:ablation} with standard deviations reported in appendix. From the table we can see overall, the full model performs best. For adaptation scenarios that have small domain gaps (\eg W$\rightarrow$D), the pseudo-label method alone can achieve reasonably good results. In terms of OS and OS$^*$, the transformation consistency objective alone can achieve high scores. However, its total accuracy is significantly lower than the full model. pl improves the total accuracies (Acc) in every adaptation scenario. On VisDA we can see from Table~\ref{table:alter} that pl or tc alone performs significantly worse. This suggests that the full model is better in classifying known instances and detecting unknown instances, and the two objectives are complementary.

\subsubsection{Reliability of pseudo-label}

\label{sec:reliability}

\begin{figure}[t]
	\begin{minipage}{.31\textwidth}
        \centering
        \includegraphics[width=1\textwidth]{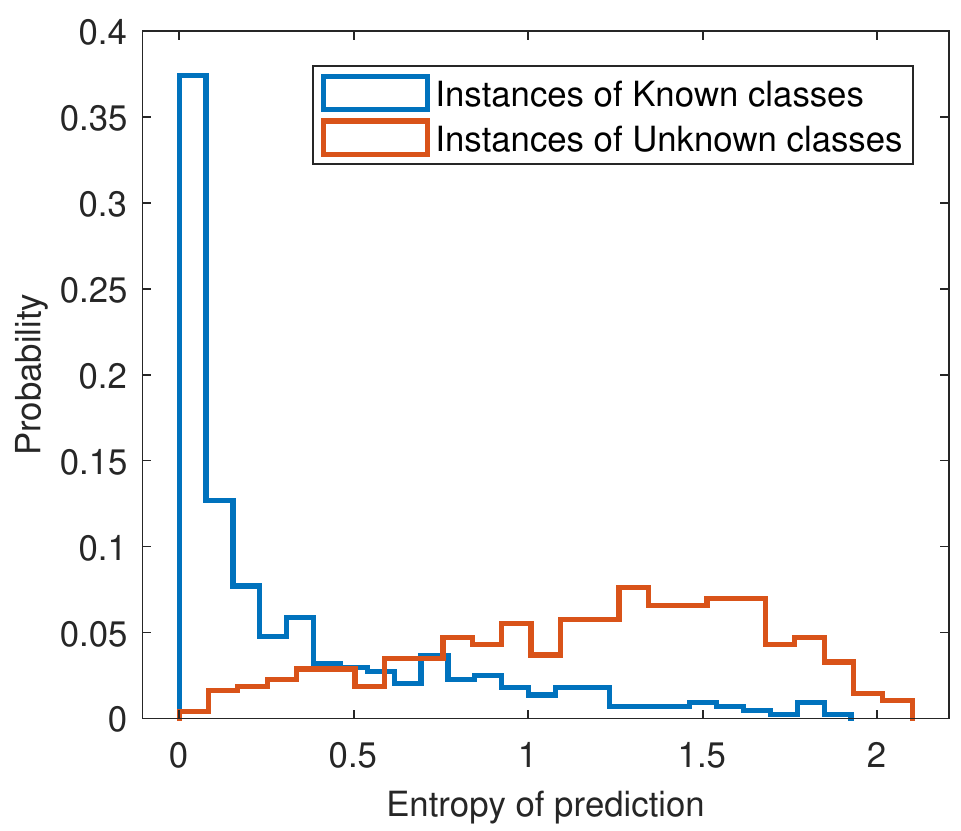}
        \captionof{figure}{Histogram of target instance entropy on Office-31 A$\rightarrow$D.}
        \label{fig:entropy}
	\end{minipage}
    \hfill
    \begin{minipage}{.31\textwidth}
        \centering
        \includegraphics[width=1\textwidth]{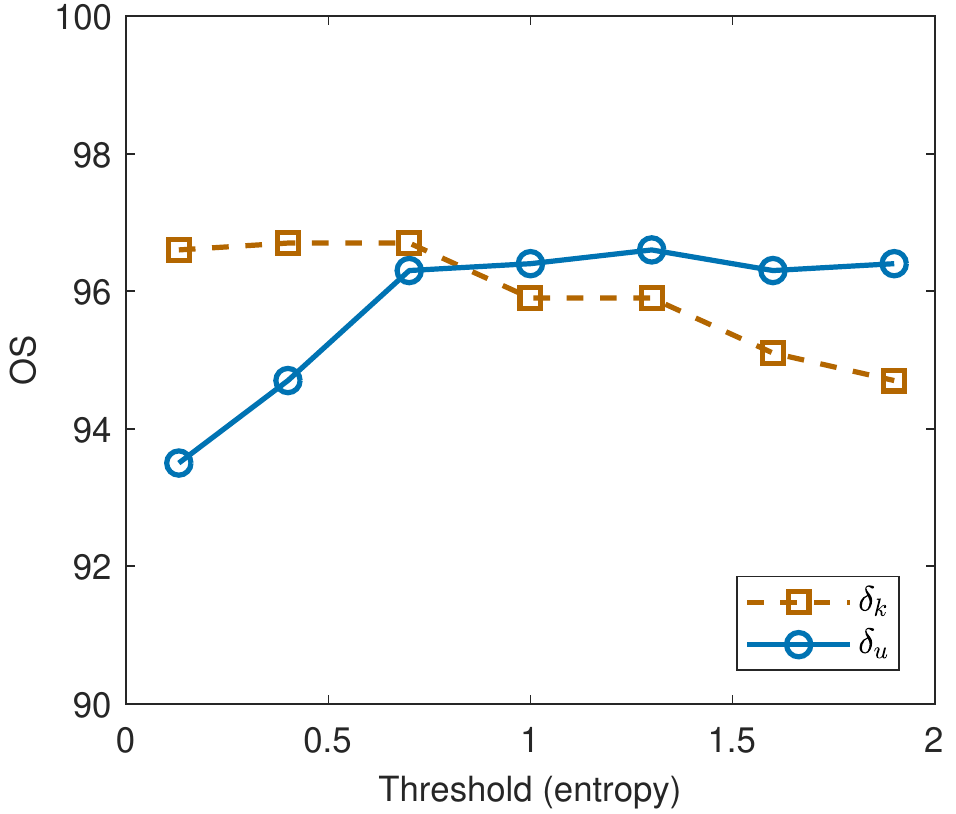}
        \captionof{figure}{\small Sensitivity of $\delta_k$ and $\delta_u$.}
        \label{fig:sens}
    \end{minipage}
    \hfill
	\begin{minipage}{.31\textwidth}
		\centering
		\includegraphics[width=1\textwidth]{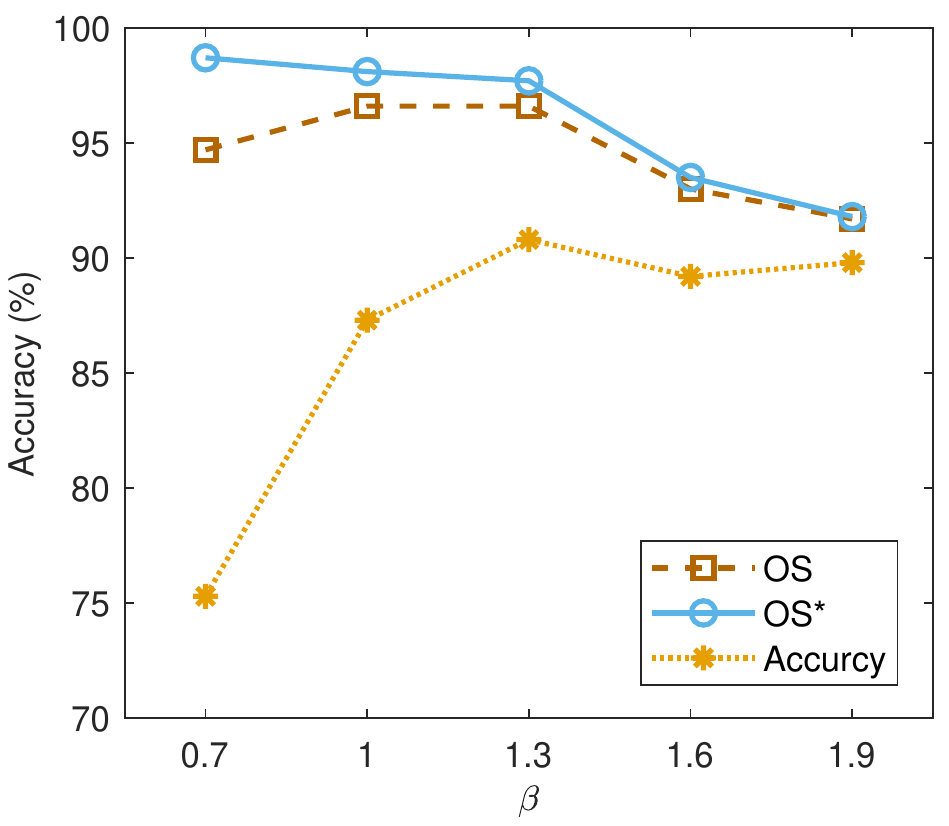}
		\captionof{figure}{Classification accuracies on Office-31 with different $\beta$ value.}
		\label{fig:beta}
	\end{minipage}
\end{figure}

\begin{figure}
	\begin{minipage}{.63\textwidth}
		\begin{subfigure}{.49\textwidth}
			\centering
			\includegraphics[width=1\textwidth]{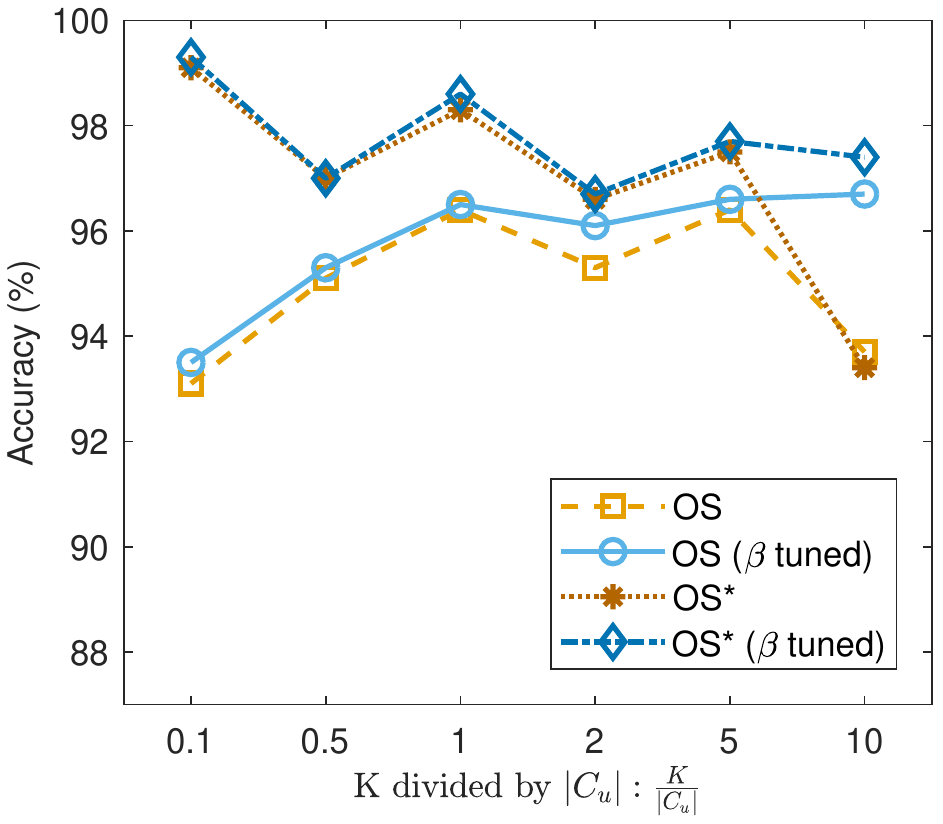}
			\caption{OS and OS$^*$.}
			\label{fig:k1}
		\end{subfigure}
		\hfill
		\begin{subfigure}{.49\textwidth}
			\centering
			\includegraphics[width=1\textwidth]{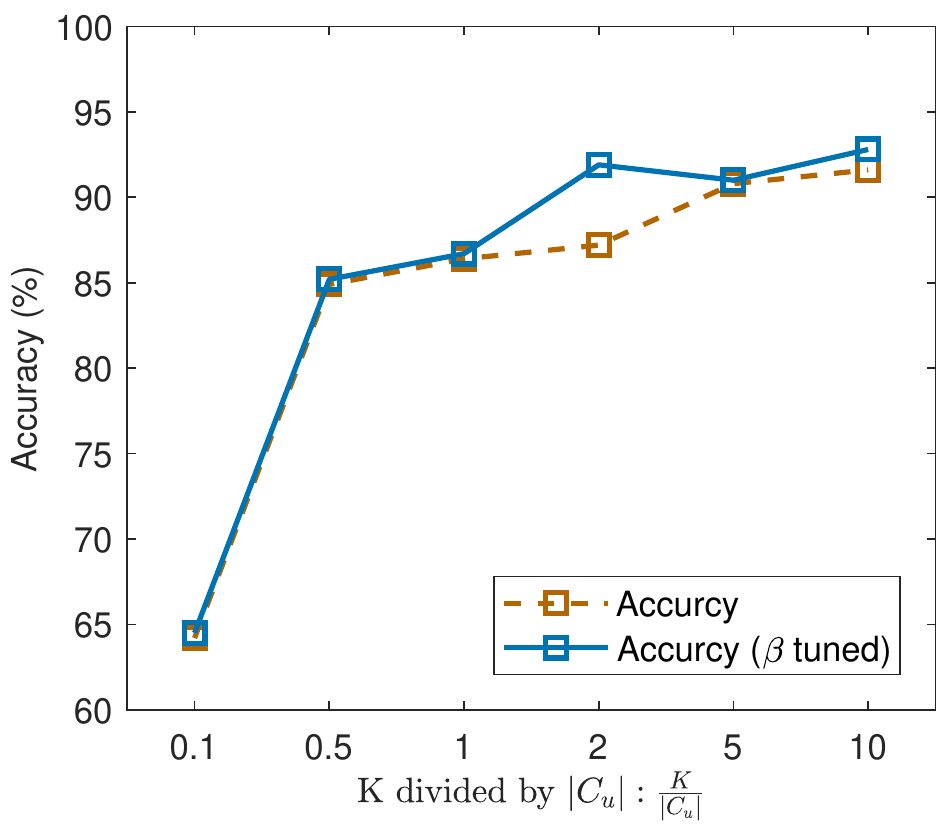}
			\caption{Total accuracy.}
			\label{fig:k2}
		\end{subfigure}
		\caption{Classification accuracies on Office-31 with different $\nicefrac{K}{\lvert C_u\rvert}$.}
		\label{fig:k}
	\end{minipage}
	\hfill
	\begin{minipage}{.31\textwidth}
		\centering
		\includegraphics[width=1\textwidth]{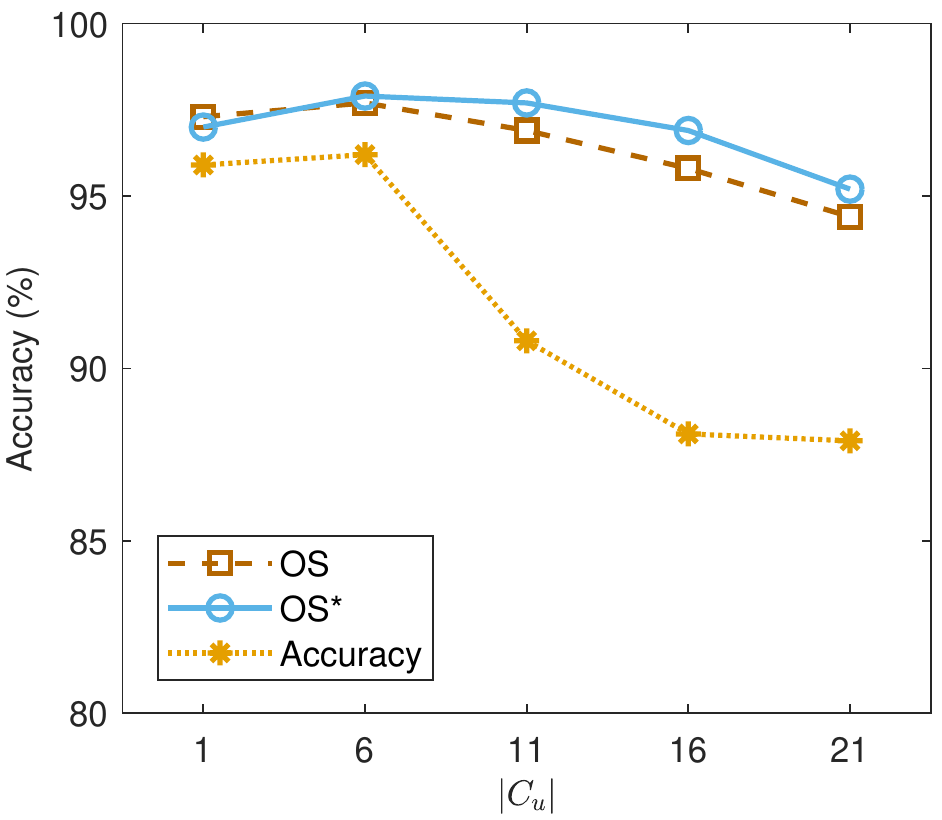}
		\captionof{figure}{Classification accuracies on Office-31 with different openness.}
		\label{fig:open}
	\end{minipage}
\end{figure}

We plot the histogram of $\hat{f}_s$'s prediction entropy on target domain instances with known and unknown classes plotted separately. Figure~\ref{fig:entropy} shows that they are statistically different. Thus, by using a threshold, the selected pseudo-known instances (left) and pseudo-unknown instances (right) are mostly correct. The accuracies of generated pseudo-labels are 96\% for known-class instances and 90\% for the unknown instances, respectively. On VisDA, we can generate pseudo-labels with 77\% and 61\% accuracies. This means even with a large domain gap the selected pseudo-labels are mostly accurate, which verify the reasonableness of using pseudo-labels in UODA.

\subsubsection{Alternative design}

\label{sec:alter}

Another distance measure on transformations is the negative cosine similarity. In contrast to mutual information, it only considers the distance of individual instance pairs rather than statistical dependence. Results in Table~\ref{table:alter} shows that it fails to converge on VisDA in UODA setting. Another measure of confidence is the maximal probability in $\hat{f}_s$'s prediction. When using this criterion, we find that the proposed method can achieve competitive performance as shown in Table~\ref{table:alter}.

\subsubsection{Hyperparameters}

\label{sec:hyper}

\noindent \textbf{Effect of $\beta$}\quad We study the effect of the hyperparameters $\beta$. We found that our model's performance is consistent across domains: if a model performs better on one adaptation scenario then it performs better on other scenarios as well. We report results on Office-31 A$\rightarrow$D scenario for comparison. Figure~\ref{fig:beta} shows the OS, OS$^*$ and total accuracy with respect to $\beta$. It can be seen that the model can achieve a competitive performance when $\beta=1$, which corresponds to the original mutual information formulation. When $\beta$ is small, the model is encouraged to give predictions with low entropy, which will mistakenly predict unknown-class instance into known source classes and decrease the accuracy. This can be seen from $\beta=0.85$ where OS$^*$ is very high while the total accuracy is low. The total accuracy can be further improved when $\beta$ is tuned.

\noindent \textbf{Effect of $K$}\quad In order to study the effect of $K$ we have to relate it to the number of unknown classes $\lvert C_u\rvert=\lvert C_t\rvert-\lvert C_s\rvert$ in the target domain and consider their ratio. Thus, in Figure~\ref{fig:k} we plot the accuracies with respective to $\nicefrac{K}{\lvert C_u\rvert}$, which is chosen to range from 0.1 to 10. $\nicefrac{K}{\lvert C_u\rvert}=0.1$ means the model outputs few scores with respect to $\lvert C_u\rvert$ (which is 1 in the case of Office-31 A$\rightarrow$D scenario) and vice versa. The model has low performance when $\nicefrac{K}{\lvert C_u\rvert}=0.1$, which is expected as in the situation, the model has to classify different unknown-class instances into a single category even they are semantically different, which will confound the training since the training objective is to discriminate every different semantics as much as possible. Experimental results with fixed $\beta$ reveal that performance drops when $K$ is too large. The reason is that the entropy term in our objectives is dependent upon the number of outputs ($\lvert C_s\rvert+K$), thus the optimal $\beta$ may differ. However, when $\beta$ is tuned the proposed method can have a high and stable performance when $K$ is large. Hence, in practice, we can first set a rather large $K$ and then select the value of $\beta$ using model selection techniques like cross-validation.

\noindent \textbf{Sensitivity of threshold $\delta_k$ and $\delta_u$}\quad We perform an analysis on the value of threshold $\delta_k$ and $\delta_u$ on Office-31 A$\rightarrow$D. Results in Figure~\ref{fig:sens} show that increasing $\delta_k$ or decreasing $\delta_u$ too much can hurt the performance.

\subsubsection{Openness}
In this section we examine the model's ability in datasets with varying openness, where the number of unknown classes in target domain changes. Results in Figure~\ref{fig:open} show that in terms of OS and OS$^*$, our method can get stable and high averaged accuracy when $\lvert C_u\rvert$ varies. The total accuracy becomes higher as $\lvert C_u\rvert$ gets smaller, which means the UODA task with less unknown classes are easier.

\subsubsection{Qualitative analysis}

\begin{figure}
    \centering
    \includegraphics[width=0.49\textwidth]{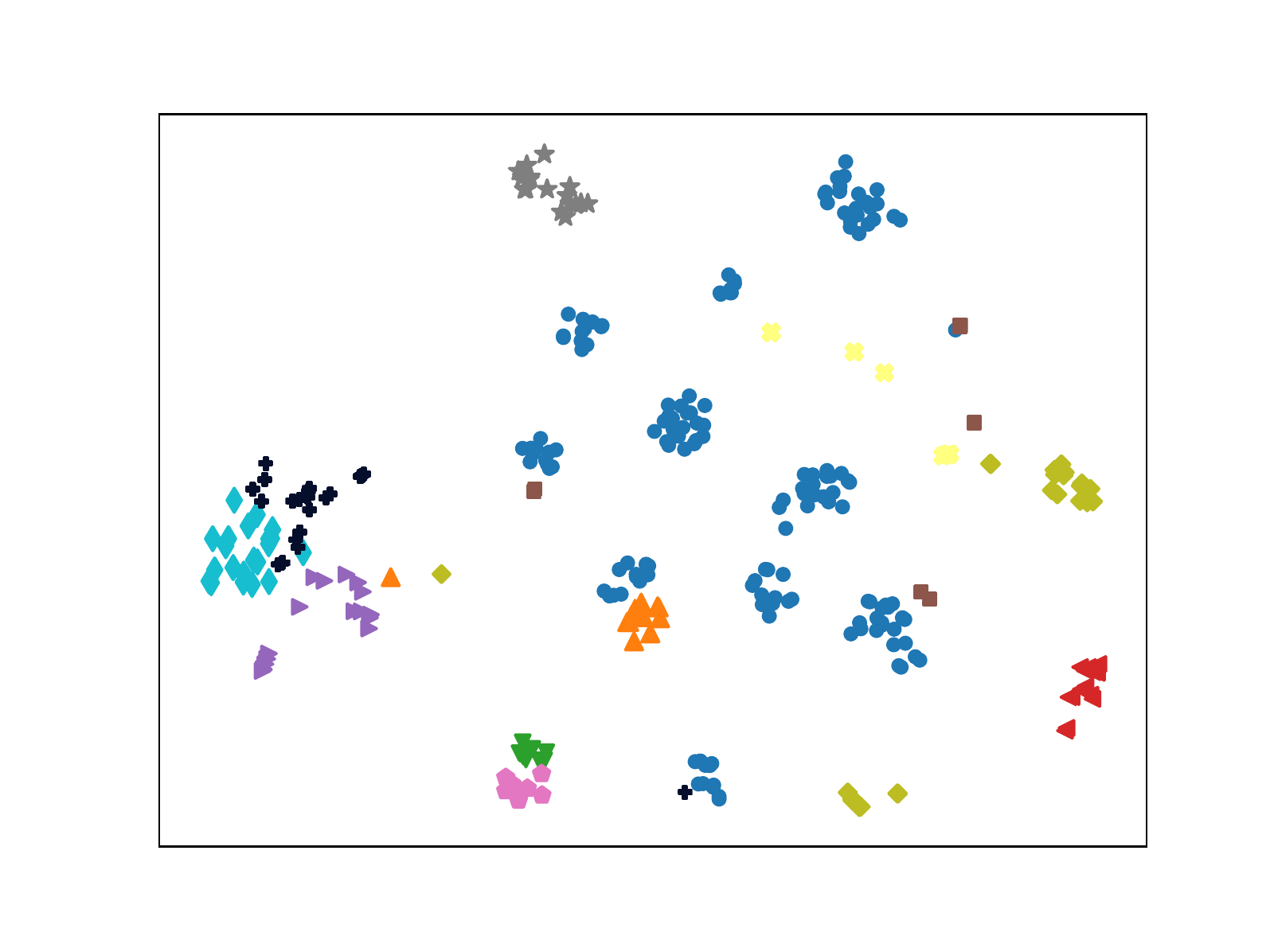}
    \caption{t-SNE embeddings on target domain. Blue dots represent known source classes. Unknown classes are in different colors and shapes.}
    \label{fig:tsne}
\end{figure}

Figure~\ref{fig:tsne} plots the t-SNE~\cite{tsne} embeddings of the pre-logits features on target domain (DSLR) in Office-31 dataset under the A$\rightarrow$D adaptation scenario. Instances belonging to known source classes are shown as blue dots. Different unknown target classes are depicted in different colors and shapes. It can be seen from the figure that our method can classify known source instances well into 10 classes. Furthermore, our method also successfully classify different unknown target classes into different clusters rather than merge them as one. This discrimination on unknown classes is quite accurate even when $K$ is much larger than $C_u$.

\section{Conclusions}
We have presented a novel method for open-set domain adaptation in the hypothesis transfer setting, where only source domain model is provided as apposed to source data. Our method consists of generating pseudo-labels for confident target instances using the pre-trained model and ensuring target prediction consistency of semantically similar pairs by mutual information maximization. This structural semantic information is rarely appreciated by previous domain adaptation methods. The proposed method is easy to optimize and deploy. More knowledge behind the target data can be explored as future work, and together with out-of-distribution training on source domain they are two promising directions for domain adaptation. The proposed solution provides a new formulation of UODA via a semi-supervised clustering perspective. We believe our work would guide various future research in unsupervised domain adaptation as an important baseline.

{\small
\bibliographystyle{plainnat}
\bibliography{main}
}

\clearpage

\appendix

\newtheorem{innercustomthm}{Proposition}
\newenvironment{customthm}[1]
{\renewcommand\theinnercustomthm{#1}\innercustomthm}
{\endinnercustomthm}

\section{Proof of Proposition~\ref{thm:mi}}

In this section we provide the proof of Proposition~\ref{thm:mi}. Here and in the sequel the superscript $t$ is dropped as it is clear from the context.

\begin{innercustomthm}
	Assume that transformation $x_+$ does not lose the information of its true label $y$ and contains no other information in $x$, \ie $x_+=\argmin_{x'}I(x;x'),\,\textrm{s.t.}\, I(x;y)=I(x';y)$, then
	\begin{equation}
	I(\tilde{y};\tilde{y}_+) \leq I(\tilde{y};y).
	\end{equation}
\end{innercustomthm}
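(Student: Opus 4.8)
The plan is to derive the inequality from two applications of the data-processing inequality (DPI), once I have pinned down the conditional-independence (Markov) structure linking $x,x_+,y,\tilde y,\tilde y_+$. The structural facts I would record first: $\tilde y$ is sampled according to $p(\tilde y=c\mid x)=\sigma_c(f(x;\theta_t))$, so $\tilde y$ is conditionally independent of everything else given $x$; symmetrically $\tilde y_+$ depends on $x_+$ alone; and since $x_+$ is a (possibly randomized) augmentation of $x$, the chain $y\to x\to x_+$ is Markov. In particular $\tilde y - x_+ - \tilde y_+$, so DPI already gives $I(\tilde y;\tilde y_+)\le I(\tilde y;x_+)$.

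The key step is to show the assumption forces $x\perp x_+\mid y$, i.e.\ the Markov chain $x-y-x_+$. ``$x_+$ contains no other information in $x$'' is precisely this statement; to make it quantitative I would use the chain rule: $y\to x\to x_+$ gives $I(x_+;y\mid x)=0$, hence $I(x_+;x\mid y)=I(x_+;x)-I(x_+;y)$, and the hypotheses that $x_+$ loses no label information ($I(x_+;y)=I(x;y)$) and shares with $x$ only that label information ($I(x_+;x)=I(x;y)$) give $I(x_+;x\mid y)=0$.

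Combining $x\perp x_+\mid y$ with $\tilde y\perp(x_+,y)\mid x$ yields $\tilde y - y - x_+$: indeed $p(\tilde y,x_+\mid y)=\sum_x p(x\mid y)\,p(x_+\mid y)\,p(\tilde y\mid x)=p(x_+\mid y)\,p(\tilde y\mid y)$. A second application of DPI then gives $I(\tilde y;x_+)\le I(\tilde y;y)$, and chaining the two bounds yields $I(\tilde y;\tilde y_+)\le I(\tilde y;x_+)\le I(\tilde y;y)$, as claimed.

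The main obstacle is the middle step: extracting the clean conditional independence $x\perp x_+\mid y$ from the assumption, which is stated both informally in words and as a constrained minimization $x_+=\argmin_{x'}I(x;x')$ subject to $I(x;y)=I(x';y)$ (one should note the minimum value is $I(x;y)$, attained exactly when the ``no other information'' condition holds). Everything after that is routine DPI bookkeeping; the only care needed is that $\tilde y$ and $\tilde y_+$ are genuinely stochastic functions of their inputs, so every Markov relation must be stated as a conditional independence rather than a deterministic dependence.
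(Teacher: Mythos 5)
Your proof is correct, but it takes a genuinely different route from the paper's. Both arguments start identically: from the Markov chain $y\to x\to x_+$ and the constrained minimization you extract $I(x;x_+)=I(x;y)=I(x_+;y)$ (you and the paper both implicitly assume the DPI lower bound $I(x;x')\ge I(y;x')$ is actually attained by the augmentation). From there the paths diverge. The paper equates two chain-rule decompositions, $I(x;x_+)=I(\tilde y;\tilde y_+)+I(\tilde y;x_+\mid\tilde y_+)+I(x;x_+\mid\tilde y)$ and $I(y;x)=I(y;\tilde y)+I(y;x\mid\tilde y)$, and then disposes of the residual by proving $I(y;x\mid\tilde y)\le I(x;x_+\mid\tilde y)$ through an explicit log-sum-inequality computation. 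You instead convert the equalities into the conditional independence $x\perp x_+\mid y$ (via $I(x;x_+\mid y)=I(x;x_+)-I(x_+;y)=0$), combine it with the facts that $\tilde y$ depends only on $x$ and $\tilde y_+$ only on $x_+$ to obtain the two Markov chains $\tilde y - x_+ - \tilde y_+$ and $\tilde y - y - x_+$, and finish with two applications of the data-processing inequality through the intermediate quantity $I(\tilde y;x_+)$; your verification $p(\tilde y,x_+\mid y)=p(\tilde y\mid y)\,p(x_+\mid y)$ checks out. Your version is shorter and more transparent --- it isolates the real content of the assumption, namely that $x$ and $x_+$ share nothing beyond $y$, and replaces the paper's log-sum manipulation with routine DPI bookkeeping --- while the paper's version yields the slightly more informative intermediate identity $I(\tilde y;\tilde y_+)=I(y;\tilde y)+I(y;x\mid\tilde y)-I(\tilde y;x_+\mid\tilde y_+)-I(x;x_+\mid\tilde y)$, which exhibits the terms making up the gap. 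There is no hole to fill in your argument.
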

\begin{proof}
	Inspired by~\cite{tian2020makes}, we first show that $x_+$ contains same amount information of $y$ as $x$. Since the joint probability distribution of $y$, $x$ and $x_+$ can be written as $p(y,x,x_+)=p(x_+\vert x)p(x\vert y)p(y)$, $(y, x, x_+)$ form a Markov chain $y\rightarrow x\rightarrow x_+$. By data-processing inequality, $I(x;x_+)\geq I(y;x_+)$. Therefore when $x_+$ achieves the minimal value of $I(x;x_+)$
	\begin{equation}
	\label{eq:equal}
	I(x;x_+)=I(x;y)=I(x_+;y).
	\end{equation}
	Then recall that $\tilde{y}$ and $\tilde{y}_+$ are functions of $x$ and $x_+$, respectively. By chain rule for mutual information, we can have
	\begin{equation}
	\label{eq:twoy}
	\begin{split}
	I(x;x_+) &= I(x,\tilde{y};x_+) \\
	&= I(x_+;\tilde{y}) + I(x_+;x\vert \tilde{y}) \\
	&= I(\tilde{y};x_+,\tilde{y}_+) + I(x;x_+\vert\tilde{y}) \\
	&= I(\tilde{y};\tilde{y}_+) + I(\tilde{y};x_+\vert\tilde{y}_+) + I(x;x_+\vert\tilde{y}),
	\end{split}
	\end{equation}
	and
	\begin{equation}
	\label{eq:truey}
	\begin{split}
	I(y;x) &= I(y;x,\tilde{y}) \\
	&= I(y;\tilde{y}) + I(y;x\vert\tilde{y}).
	\end{split}
	\end{equation}
	Next by log sum inequality, we can write
	\begin{equation}
	\label{eq:lsi}
	\begin{split}
	I(y;x\vert\tilde{y}) - I(x;x_+\vert\tilde{y}) &= \sum_{y,x,\tilde{y}}p(y,x,\tilde{y})\log\frac{p(y,x\vert\tilde{y})}{p(y\vert\tilde{y})p(x\vert\tilde{y})} - \sum_{x,x_+,\tilde{y}}p(x,x_+,\tilde{y})\log\frac{p(x,x_+\vert\tilde{y})}{p(x\vert\tilde{y})p(x_+\vert\tilde{y})} \\
	&= \sum_{y,x,x_+,\tilde{y}}p(y,x,x_+,\tilde{y})\log\frac{p(y,x\vert\tilde{y})}{p(y\vert\tilde{y})}\frac{p(x_+\vert\tilde{y})}{p(x,x_+\vert\tilde{y})} \\
	&= \sum_{y,x,x_+,\tilde{y}}p(y,x,x_+,\tilde{y})\log\frac{p(y,x,\tilde{y})}{p(x_+,x,\tilde{y})}\frac{p(x_+,\tilde{y})}{p(y,\tilde{y})} \\
	&= \sum_{y,x,x_+,\tilde{y}}p(y,x,x_+,\tilde{y})\log\frac{p(\tilde{y}\vert x)p(x\vert y)p(y)}{p(\tilde{y}\vert x)p(x\vert x_+)p(x_+)}\frac{p(\tilde{y}\vert x_+)p(x_+)}{p(\tilde{y}\vert y)p(y)} \\
	&= \sum_{y,x,x_+}p(y,x,x_+)\log\frac{p(x\vert y)}{p(x\vert x_+)} + \sum_{y,x,x_+,\tilde{y}}p(y,x,x_+,\tilde{y})\log\frac{p(\tilde{y}\vert x_+)}{p(\tilde{y}\vert y)} \\
	&= \sum_{y,x,x_+}p(y,x,x_+)\log\frac{p(x\vert y)}{p(x\vert x_+)} + \sum_{y,x_+,\tilde{y}}\sum_x p(y,x,x_+,\tilde{y})\log\frac{\sum_x p(\tilde{y},x\vert x_+)}{\sum_x p(\tilde{y},x\vert y)} \\
	&\leq \sum_{y,x,x_+}p(y,x,x_+)\log\frac{p(x\vert y)}{p(x\vert x_+)} + \sum_{y,x,x_+,\tilde{y}}p(y,x,x_+,\tilde{y})\log\frac{p(\tilde{y},x\vert x_+)}{p(\tilde{y},x\vert y)} \\
	&= \sum_{y,x,x_+,\tilde{y}}p(y,x,x_+,\tilde{y})\log\frac{p(x\vert y)}{p(x\vert x_+)}\frac{p(\tilde{y}\vert x)p(x\vert x_+)}{p(x\vert y)p(\tilde{y}\vert x)} \\
	&= 0.
	\end{split}
	\end{equation}
	Therefore, by combining Eq.~(\ref{eq:equal})~(\ref{eq:twoy})~(\ref{eq:truey}) and (\ref{eq:lsi}), we can complete the proof
	\begin{equation}
	\begin{split}
	I(\tilde{y};\tilde{y}_+) &= I(y;\tilde{y}) + I(y;x\vert\tilde{y}) - I(\tilde{y};x_+\vert\tilde{y}_+) - I(x;x_+\vert\tilde{y}) \\
	&\leq I(y;\tilde{y}) - I(\tilde{y};x_+\vert\tilde{y}_+) \\
	&\leq I(y;\tilde{y}).
	\end{split}
	\end{equation}
\end{proof}

\section{Mutual information objective}

\label{sec:app_mi}

In this paper we use the objective $\beta H(\tilde{y}^t)-H(\tilde{y}^t\vert \tilde{y}_+^t)$ that balances between entropy and conditional entropy, which leads to a formulation of mutual information with a $\beta$ coefficient~\cite{Ji_2019_ICCV}
\begin{equation}
    \begin{split}
        I_{\beta}(y_1,y_2) &= \frac{1}{2}\big(\beta H(y_1)-H(y_1\vert y_2)+\beta H(y_2)-H(y_2\vert y_1)\big) \\
                           &= \frac{1}{2}\big(-\beta\sum_{c_1}p(c_1)\log p(c_1)+\sum_{c_1,c_2}p(c_1,c_2)\log(c_1\vert c_2) \\
                           & \quad \quad \quad -\beta\sum_{c_2}p(c_2)\log p(c_2)+\sum_{c_1,c_2}p(c_1,c_2)\log(c_2\vert c_1)\big) \\
                           &= \frac{1}{2}\sum_{c_1,c_2}p(c_1,c_2)\log\frac{p(c_1\vert c_2)}{p(c_1)^{\beta}} + \frac{1}{2}\sum_{c_1,c_2}p(c_1,c_2)\log\frac{p(c_2\vert c_1)}{p(c_2)^{\beta}} \\
                           &= \sum_{c_1,c_2}p(c_1,c_2)\log\frac{p(c_1,c_2)}{\big(p(c_1)p(c_2)\big)^{\frac{\beta+1}{2}}}.
    \end{split}
\end{equation}

\section{Proof of Proposition~\ref{thm:consistency}}

\begin{innercustomthm}
	The estimator $\hat{I}_{\beta}(\tilde{y};\tilde{y}_+)$ of $I_{\beta}(\tilde{y};\tilde{y}_+)$ is consistent.
\end{innercustomthm}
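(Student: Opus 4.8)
The plan is to write $\hat I_\beta$ as a fixed continuous functional of the empirical averages $\{P_{c,c_+}\}$, show those averages converge almost surely to the true probabilities $\{p(c,c_+)\}$ by the strong law of large numbers, and then invoke the continuous mapping theorem. Concretely, for a nonnegative array $a=\{a_{c,c_+}\}$ with marginals $a_c:=\sum_{c_+}a_{c,c_+}$ and $a_{c_+}:=\sum_c a_{c,c_+}$, set $\Phi(a):=\sum_{c,c_+}a_{c,c_+}\log\frac{a_{c,c_+}}{(a_c\,a_{c_+})^{(\beta+1)/2}}$, so that $\hat I_\beta(\tilde y;\tilde y_+)=\Phi(\{P_{c,c_+}\})$ and $I_\beta(\tilde y;\tilde y_+)=\Phi(\{p(c,c_+)\})$; note that since $\sum_{c_+}p(c_+\mid x_+)=1$, summing the definition of $P_{c,c_+}$ over $c_+$ recovers exactly the stated marginal estimator $P_c$, so the marginal plug-ins are consistent with marginalizing the joint plug-in.

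First I would observe that, with the network parameters $\theta_t$ held fixed, the pairs $(x_j^t,x_{j,+}^t)$ are i.i.d.\ draws from the joint law $p(x,x_+)$, and for each $(c,c_+)$ the random variable $p(c\mid x)p(c_+\mid x_+)$ lies in $[0,1]$, hence is integrable with mean $\int p(c\mid x)p(c_+\mid x_+)\,\diff p(x,x_+)=p(c,c_+)$. The strong law of large numbers then gives $P_{c,c_+}\to p(c,c_+)$ almost surely for each of the finitely many pairs; summing over $c_+$ (resp.\ $c$) yields $P_c\to p(c)$ and $P_{c_+}\to p(c_+)$ almost surely as well.

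Next I would argue that $\Phi$ is continuous at the true point $\{p(c,c_+)\}$, which is where the only real work lies. The map $t\mapsto t\log t$, extended by the convention $0\log 0:=0$, is continuous on $[0,1]$, and $a_{c,c_+}\log(a_ca_{c_+})^{-(\beta+1)/2}=-\tfrac{\beta+1}{2}a_{c,c_+}(\log a_c+\log a_{c_+})$ is continuous wherever $a_c>0$ and $a_{c_+}>0$; in particular, when $p(c),p(c_+)>0$ but $p(c,c_+)=0$ the summand still tends to $0$ because $t\log t\to 0$ and $a_{c,c_+}(\log a_c+\log a_{c_+})\to 0$. The only genuine obstruction is an index with $p(c)=0$ or $p(c_+)=0$: say $p(c)=0$, which forces $p(c\mid x)=0$ for $p_X$-a.e.\ $x$, hence $p(c,c_+)=0$ for all $c_+$, and, with probability one, $p(c\mid x_j^t)=0$ for every sampled point, so $P_{c,c_+}=0$ for all $n_t$. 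Such indices contribute $0$ to both $\hat I_\beta$ and $I_\beta$ and may be deleted from both sums; on the reduced index set $\{(c,c_+):p(c)>0,\ p(c_+)>0\}$ the denominators are bounded away from $0$ near the true point, so $\Phi$ is there a finite sum of continuous functions.

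Combining the two previous steps with the continuous mapping theorem gives $\hat I_\beta(\tilde y;\tilde y_+)=\Phi(\{P_{c,c_+}\})\to\Phi(\{p(c,c_+)\})=I_\beta(\tilde y;\tilde y_+)$ almost surely, hence in probability, which is precisely the claimed consistency. The main obstacle is not the limiting argument, which is a routine application of the SLLN plus the continuous mapping theorem, but the careful handling of the boundary of the probability simplex — reconciling the $0\log 0$ convention with the possible vanishing of $p(c)$ or $p(c_+)$ and checking that the estimator degenerates in the same way on those coordinates.
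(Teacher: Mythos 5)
Your proof is correct and follows essentially the same route as the paper's: the strong law of large numbers gives almost-sure convergence of the plug-in probabilities $P_{c,c_+}$ and $P_c$, and continuity of $t\mapsto\log t$ and $t\mapsto t\log t$ together with the continuous mapping theorem transfers this to the estimator. The only difference is that your careful boundary analysis on the probability simplex is unnecessary in this setting: since $p(c\mid x)=\sigma_c(f(x;\theta_t))>0$ for every $c$ and $x$ (softmax outputs are strictly positive), all of $p(c)$, $p(c,c_+)$, $P_c$, $P_{c,c_+}$ are strictly positive, which is exactly the observation the paper uses to dispense with the $0\log 0$ issue.
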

\begin{proof}
	\begin{equation}
	\begin{split}
	\lvert\hat{I}_{\beta}(\tilde{y};\tilde{y}_+) - I_{\beta}(\tilde{y};\tilde{y}_+)\rvert &\leq \sum_{c,c_+} \lvert P_{c,c_+}\log\frac{P_{c,c_+}}{(P_cP_{c_+})^{\frac{\beta+1}{2}}} - p(c,c_+)\log\frac{p(c,c_+)}{(p(c)p(c_+))^{\frac{\beta+1}{2}}}\rvert \\
	&\leq \sum_{c,c_+} \lvert P_{c,c_+}\log P_{c,c_+} - p(c,c_+)\log p(c,c_+)\rvert \\
	& \qquad + \frac{\beta+1}{2}\lvert p(c,c_+)\log p(c)p(c_+) - P_{c,c_+}\log P_cP_{c_+}\rvert.
	\end{split}
	\end{equation}
	According to the law of large numbers we have $P_{c,c_+}\xrightarrow{\text{a.s.}}p(c,c_+)$ and $P_c\xrightarrow{\text{a.s.}}p(c)$. Since $\sigma_c(f(\cdot))>0$ for any $c$, the individual probabilities in the above equation are all positive. Note that both $\log x$ and $x\log x$ are continuous functions for $x>0$. Therefore, by continuous mapping theorem and Slutsky's theorem, we can finally have $\hat{I}_{\beta}(\tilde{y};\tilde{y}_+)\xrightarrow{\text{a.s.}}I_{\beta}(\tilde{y};\tilde{y}_+)$.
\end{proof}

\section{Additional experimental results}

We summarize the additional standard deviations in Table~\ref{table:office31_std},~\ref{table:officehome_std} and~\ref{table:ablation_std}. Results of OS$^*$ are shown in Table~\ref{table:officehome_os}.

\begin{table}
    \small
    \centering
    \caption{Standard deviations of average classification accuracy ($\%$) on Office-31 (ResNet-50).}
    \label{table:office31_std}
    \renewcommand{\tabcolsep}{4.6pt}
    \begin{tabular}{lccccccccccccc} \toprule
        \multirow{2}{*}{\textbf{Method\textbackslash Task}} && \multicolumn{2}{c}{A$\rightarrow$D} & \multicolumn{2}{c}{A$\rightarrow$W} & \multicolumn{2}{c}{D$\rightarrow$A} & \multicolumn{2}{c}{D$\rightarrow$W} & \multicolumn{2}{c}{W$\rightarrow$A} & \multicolumn{2}{c}{W$\rightarrow$D} \\ \cmidrule{3-14}
                        && OS & OS$^*$ & OS & OS$^*$ & OS & OS$^*$ & OS & OS$^*$ & OS & OS$^*$ & OS & OS$^*$ \\ \midrule
        ResNet             						&& $\pm$0.3 & $\pm$0.9 & $\pm$1.2 & $\pm$0.9 & $\pm$1.0 & $\pm$1.1 & $\pm$0.3 & $\pm$0.7 & $\pm$1.0 & $\pm$1.6 & $\pm$0.2 & $\pm$0.4 \\
        RTN~\cite{NIPS2016_6110}                && $\pm$1.4 & $\pm$1.6 & $\pm$1.2 & $\pm$1.0 & $\pm$0.9 & $\pm$1.5 & $\pm$0.3 & $\pm$0.7 & $\pm$0.6 & $\pm$1.4 & $\pm$0.2 & $\pm$0.9 \\
        DANN~\cite{pmlr-v37-ganin15}            && $\pm$0.6 & $\pm$0.6 & $\pm$0.7 & $\pm$1.1 & $\pm$1.6 & $\pm$0.9 & $\pm$0.2 & $\pm$0.5 & $\pm$1.2 & $\pm$0.8 & $\pm$0.1 & $\pm$0.0 \\
        OpenMax~\cite{Bendale_2016_CVPR}       	&& $\pm$0.9 & $\pm$0.9 & $\pm$0.5 & $\pm$0.3 & $\pm$1.0 & $\pm$0.6 & $\pm$0.4 & $\pm$0.3 & $\pm$0.9 & $\pm$0.6 & $\pm$0.3 & $\pm$0.3 \\
        ATI-$\lambda$~\cite{Busto_2017_ICCV}	&& $\pm$1.2 & $\pm$1.1 & $\pm$1.5 & $\pm$1.4 & $\pm$1.8 & $\pm$1.5 & $\pm$1.0 & $\pm$1.0 & $\pm$1.4 & $\pm$1.2 & $\pm$0.9 & $\pm$0.8 \\
        OSBP~\cite{Saito_2018_ECCV}				&& $\pm$1.4 & $\pm$1.3 & $\pm$2.0 & $\pm$2.1 & $\pm$2.5 & $\pm$2.3 & $\pm$1.0 & $\pm$0.4 & $\pm$2.5 & $\pm$1.3 & $\pm$0.9 & $\pm$0.6 \\
        STA~\cite{Liu_2019_CVPR}				&& $\pm$1.5 & $\pm$0.4 & $\pm$0.6 & $\pm$0.5 & $\pm$0.5 & $\pm$0.8 & $\pm$0.2 & $\pm$0.5 & $\pm$0.9 & $\pm$0.6 & $\pm$0.2 & $\pm$0.1 \\ \midrule
        FS~\cite{inheritable_cvpr_2020}			&& $\pm$1.1 & $\pm$0.8 & $\pm$0.7 & $\pm$1.2 & $\pm$0.2 & $\pm$0.2 & $\pm$0.5 & $\pm$0.7 & $\pm$1.3 & $\pm$0.9 & $\pm$0.2 & $\pm$0.3 \\ \midrule
        Ours									&& $\pm$1.1 & $\pm$1.0 & $\pm$0.7 & $\pm$0.9 & $\pm$0.1 & $\pm$0.1 & $\pm$0.9 & $\pm$1.0 & $\pm$0.4 & $\pm$0.3 & $\pm$0.5 & $\pm$0.0 \\ \bottomrule
    \end{tabular}
\end{table}
    
\begin{table}
    \small
    \centering
    \caption{Standard deviations of average classification accuracy OS ($\%$) on Office-Home (ResNet-50).}
    \label{table:officehome_std}
    \renewcommand{\tabcolsep}{3.5pt}
    \begin{tabular}{lcccccccccccccc} \toprule
        \textbf{Method\textbackslash Task} && A$\rightarrow$C & A$\rightarrow$P & A$\rightarrow$R & C$\rightarrow$A & C$\rightarrow$P & C$\rightarrow$R & P$\rightarrow$A & P$\rightarrow$C & P$\rightarrow$R & R$\rightarrow$A & R$\rightarrow$C & R$\rightarrow$P \\ \midrule
        ResNet									&& $\pm$0.4 & $\pm$0.7 & $\pm$0.6 & $\pm$0.6 & $\pm$0.5 & $\pm$0.6 & $\pm$0.3 & $\pm$0.6 & $\pm$0.9 & $\pm$0.3 & $\pm$0.5 & $\pm$0.4 \\
        ATI-$\lambda$~\cite{Busto_2017_ICCV}	&& $\pm$1.2 & $\pm$1.1 & $\pm$0.7 & $\pm$1.2 & $\pm$1.5 & $\pm$0.7 & $\pm$0.9 & $\pm$1.6 & $\pm$1.6 & $\pm$0.5 & $\pm$1.4 & $\pm$1.5 \\
        DANN~\cite{pmlr-v37-ganin15}			&& $\pm$0.7 & $\pm$1.1 & $\pm$0.8 & $\pm$1.2 & $\pm$1.0 & $\pm$0.4 & $\pm$1.0 & $\pm$1.6 & $\pm$0.4 & $\pm$1.0 & $\pm$1.4 & $\pm$0.8 \\
        OSBP~\cite{Saito_2018_ECCV}				&& $\pm$1.9 & $\pm$1.5 & $\pm$0.9 & $\pm$2.0 & $\pm$1.5 & $\pm$2.2 & $\pm$1.1 & $\pm$2.1 & $\pm$1.9 & $\pm$1.1 & $\pm$2.4 & $\pm$1.5 \\
        OpenMax~\cite{Bendale_2016_CVPR}		&& $\pm$0.4 & $\pm$0.3 & $\pm$0.8 & $\pm$0.9 & $\pm$0.4 & $\pm$0.5 & $\pm$0.8 & $\pm$0.7 & $\pm$0.3 & $\pm$0.8 & $\pm$0.4 & $\pm$0.6 \\
        STA~\cite{Liu_2019_CVPR}				&& $\pm$0.6 & $\pm$1.2 & $\pm$0.2 & $\pm$0.5 & $\pm$1.0 & $\pm$0.4 & $\pm$0.8 & $\pm$0.9 & $\pm$0.3 & $\pm$1.0 & $\pm$1.0 & $\pm$0.5 \\ \midrule
        FS~\cite{inheritable_cvpr_2020}			&& $\pm$0.7 & $\pm$1.4 & $\pm$0.9 & $\pm$0.6 & $\pm$1.7 & $\pm$1.3 & $\pm$1.3 & $\pm$1.0 & $\pm$1.4 & $\pm$0.9 & $\pm$1.7 & $\pm$0.6 \\ \midrule
        Ours            						&& $\pm$0.6 & $\pm$0.4 & $\pm$0.7 & $\pm$0.4 & $\pm$0.6 & $\pm$0.5 & $\pm$0.3 & $\pm$0.5 & $\pm$0.3 & $\pm$0.2 & $\pm$0.5 & $\pm$0.1 \\ \bottomrule
    \end{tabular}
\end{table}

\begin{table}
    \small
    \centering
	\caption{Average classification accuracy OS$^*$ ($\%$) of our method on Office-Home (ResNet-50).}
	\label{table:officehome_os}
    \renewcommand{\tabcolsep}{2.5pt}
    \begin{tabular}{lccccccccccccccc} \toprule
        \textbf{Method\textbackslash Task} && A$\rightarrow$C & A$\rightarrow$P & A$\rightarrow$R & C$\rightarrow$A & C$\rightarrow$P & C$\rightarrow$R & P$\rightarrow$A & P$\rightarrow$C & P$\rightarrow$R & R$\rightarrow$A & R$\rightarrow$C & R$\rightarrow$P & Avg \\ \midrule
        Ours (OS$^*$)		&& 61.8 & 82.5 & 88.9 & 73.9 & 76.5 & 84.3 & 73.9 & 60.6 & 86.0 & 79.3 & 63.2 & 83.8 & 76.2 \\
        Ours (OS$^*$, std)	&& $\pm$0.6 & $\pm$0.2 & $\pm$0.5 & $\pm$0.5 & $\pm$0.5 & $\pm$0.6 & $\pm$1.1 & $\pm$0.7 & $\pm$0.3 & $\pm$0.4 & $\pm$0.6 & $\pm$0.2 & - \\ \bottomrule
    \end{tabular}
\end{table}

\begin{table}
    \small
    \centering
	\caption{Standard deviations of ablation study on Office-31 (ResNet-50).}
	\label{table:ablation_std}
    \renewcommand{\tabcolsep}{1.5pt}
    \begin{tabular}{lccccccccccccccccccc} \toprule
        \multirow{2}{*}{\textbf{M}} & \multicolumn{3}{c}{A$\rightarrow$D} & \multicolumn{3}{c}{A$\rightarrow$W} & \multicolumn{3}{c}{D$\rightarrow$A} & \multicolumn{3}{c}{D$\rightarrow$W} & \multicolumn{3}{c}{W$\rightarrow$A} & \multicolumn{3}{c}{W$\rightarrow$D} \\ \cmidrule{2-19}
        & OS & OS$^*$ & Acc & OS & OS$^*$ & Acc & OS & OS$^*$ & Acc & OS & OS$^*$ & Acc & OS & OS$^*$ & Acc & OS & OS$^*$ & Acc \\ \midrule
        pl      & $\pm$1.4 & $\pm$1.4 & $\pm$0.6 & $\pm$0.7 & $\pm$0.7 & $\pm$0.8 & $\pm$1.8 & $\pm$1.9 & $\pm$1.3 & $\pm$0.5 & $\pm$0.6 & $\pm$0.6 & $\pm$1.1 & $\pm$1.1 & $\pm$1.6 & $\pm$0.2 & $\pm$0.3 & $\pm$1.8 \\
        tc      & $\pm$0.6 & $\pm$0.7 & $\pm$0.7 & $\pm$1.1 & $\pm$1.0 & $\pm$1.4 & $\pm$0.4 & $\pm$0.0 & $\pm$2.4 & $\pm$0.0 & $\pm$0.0 & $\pm$0.2 & $\pm$0.1 & $\pm$0.1 & $\pm$1.0 & $\pm$0.3 & $\pm$0.0 & $\pm$2.0 \\
        full    & $\pm$1.1 & $\pm$1.0 & $\pm$1.8 & $\pm$0.7 & $\pm$0.9 & $\pm$0.8 & $\pm$0.1 & $\pm$0.1 & $\pm$0.2 & $\pm$0.9 & $\pm$1.0 & $\pm$1.4 & $\pm$0.4 & $\pm$0.3 & $\pm$1.1 & $\pm$0.5 & $\pm$0.0 & $\pm$2.9 \\ \bottomrule
    \end{tabular}
\end{table}

\end{document}